\theoremstyle{plain}
\newtheorem{thm}{\protect\theoremname}
\theoremstyle{plain}
\newtheorem{lem}{\protect\lemmaname}
\theoremstyle{plain}
\newtheorem{cor}{\protect\corollaryname}
\theoremstyle{definition}
\newtheorem{defn}{\protect\definitionname}
\providecommand{\corollaryname}{Corollary}
\providecommand{\lemmaname}{Lemma}
\providecommand{\theoremname}{Theorem}
\providecommand{\definitionname}{Definition}
\title{Constraint Satisfaction over Generalized Staircase Constraints}
\author{
\begin{tabular}{ccc}
	Shubhadip Mitra & Partha Dutta & Arnab Bhattacharya \\
	\url{smitr@iitk.ac.in} & \url{parthdut@in.ibm.com} & \url{arnabb@iitk.ac.in} \\
	{Dept. of Computer Science} & \multirow{2}{*}{IBM Research Lab,} & {Dept. of Computer Science} \\
	{and Engineering,} & {} & {and Engineering,} \\
	{Indian Institute of Technology,} & \multirow{2}{*}{Bangalore,} & {Indian Institute of Technology,} \\
	{Kanpur,} & {} & {Kanpur,} \\
	{India} & {India} & {India} \\
\end{tabular}
}
\date{}
\begin{document}

\maketitle

\begin{abstract}
	One of the key research interests in the area of Constraint Satisfaction
	Problem (CSP) is to identify tractable classes of constraints and develop
	efficient solutions for them. In this paper, we introduce \emph{generalized
	staircase (GS)} constraints which is an important generalization of one such
	tractable class found in the literature, namely, staircase constraints.  GS
	constraints are of two kinds, \emph{down staircase (DS)} and \emph{up
	staircase (US)}. We first examine several properties of GS constraints, and
	then show that arc consistency is sufficient to determine a solution to a
	CSP over DS constraints. Further, we propose an optimal $O(cd)$ time and
	space algorithm to compute arc consistency for GS constraints where $c$ is
	the number of constraints and $d$ is the size of the largest domain.  Next,
	observing that arc consistency is not necessary for solving a DSCSP, we
	propose a more efficient algorithm for solving it. With regard to US
	constraints, arc consistency is not known to be sufficient to determine a
	solution, and therefore, methods such as path consistency or variable
	elimination are required. Since arc consistency acts as a subroutine for
	these existing methods, replacing it by our optimal $O(cd)$ arc consistency
	algorithm produces a more efficient method for solving a USCSP. \\
\end{abstract}

\noindent
\textbf{Keywords:} Constraint Satisfaction Problem (CSP), Staircase Constraint,
Monotone Constraint, CRC Constraint, Generalized Staircase Constraint

\section{Introduction \label{sec:Intro}}

Constraint Satisfaction Problem (CSP) is one of the key techniques in artificial
intelligence that offers a simple formal framework to represent and solve
several problems in temporal reasoning, job scheduling, pattern matching and
natural language processing  \cite{Tsang93foundationsof}.  Interestingly, in
many such problems, the underlying constraints are of a specific type. For this
reason, one of the key research interests in the area of CSP is to identify
tractable classes of constraints and develop efficient solutions for them.  In
this paper, we introduce \emph{generalized staircase (GS)} constraints which is
an important generalization of one such tractable class introduced in
\cite{CSPoverConnectedRowConvex}, namely, staircase constraints. 

\begin{figure}[t]
\begin{center}
\includegraphics[width=\columnwidth]{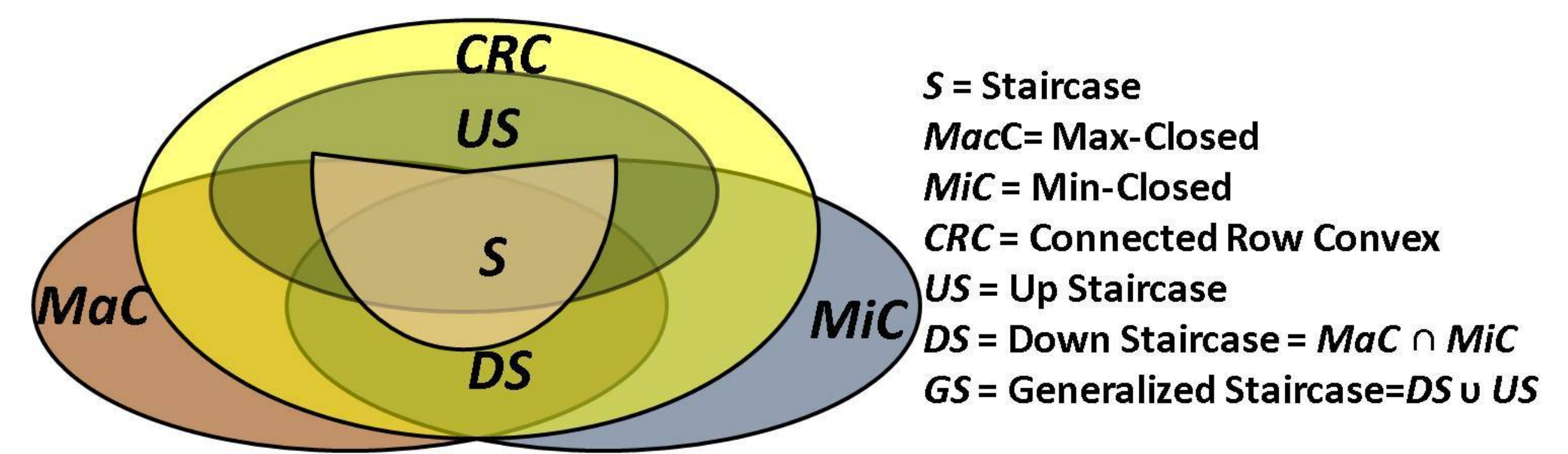}
\end{center}
\vspace*{-3mm}
\caption{The space of generalized staircase constraints.}
\vspace*{-3mm}
\label{fig:relationship}
\end{figure}

Figure~\ref{fig:relationship} shows the relationships of GS constraints with
other constraint classes. The class of GS constraints is the union of two
constraint classes, namely, \emph{down staircase (DS)} and \emph{up staircase
(US)}. While GS constraints \emph{strictly generalizes} the class of staircase
constraints ($S$, shown as a pie-shape), they are a \emph{strict subclass} of
connected row convex (CRC) constraints \cite{CSPoverConnectedRowConvex}.
Although related constraint classes such as CRC, max-closed, min-closed and
monotone constraints have been well studied in the literature, to the best of
our knowledge, the specific class of GS constraints have not been investigated
earlier. GS constraints are interesting to study as many temporal constraints
involving bounded intervals that arise in temporal reasoning and temporal
databases
\cite{Marios2007:continuous,Shubhadip2009,RelativeTemporalRETE,Wang:EDBT2006}
are often GS constraints. Due to their monotonic and convex structure, they
admit significantly simpler and faster solutions than CRC constraints.

A GS constraint can be either a down staircase (DS) or an up staircase (US)
constraint.  A DS (respectively, US) constraint is a CRC constraint
\cite{CSPoverConnectedRowConvex} such that if $[p,q]$ and $[p^\prime,q^\prime]$
are images of consecutive rows in its constraint matrix, then $p \le p^\prime
\wedge q \le q^\prime$ (respectively, $p \ge p^\prime \wedge q \ge q^\prime$).
Referring to the constraint matrices shown in Figure~\ref{fig:staircase}, a DS
(respectively, US) constraint has images of consecutive rows shifted towards
right (respectively, left) as we move down the matrix.

\begin{figure}[t]
\centering
\includegraphics[width=0.90\columnwidth]{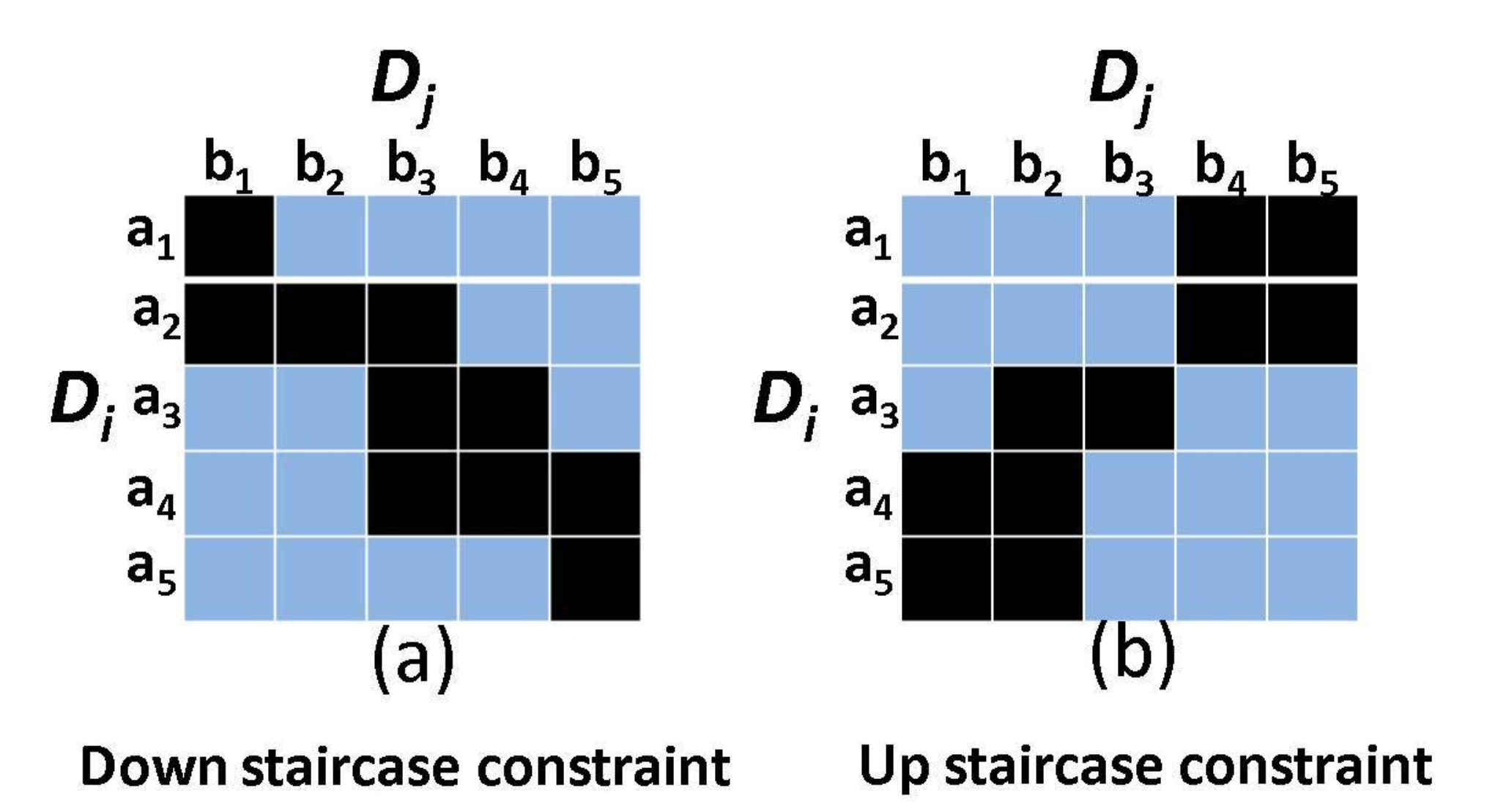}
\vspace*{-3mm}
\caption{Types of generalized staircase (GS) constraints.}
\vspace*{-3mm}
\label{fig:staircase}
\end{figure}

\subsection{Motivation}

Consider the following simplified problem of identifying cyclones at a given
location from weather records. Suppose a cyclonic instance is characterized by
the following set of events: hourly rainfall exceeding $\alpha$ units (event
$A$), hourly mean wind speed exceeding $\beta$ units (event $B$), and clockwise
wind direction (event $C$), such that the following constraints hold: (i)~$-3
\le A - B \le 1$, (ii)~$-2 \le B - C \le 2$, (iii)~$-2 \le C - A \le
3$ where $X$ denotes the time point of event $X \in \left \{A,B,C \right \}$.
These constraints essentially put a bound on the proximity of two different
events irrespective of their order of occurrence. For example, constraint~(i)
indicates that any $A$ event should start in the neighborhood of $B$ given by
$[B.s-3,B.s+1]$.

The weather database contains
a list of instances of events $A$, $B$ and $C$ sorted on $x$ where $x$ denotes an
instance of event $X$.  The goal is to determine tuples of the form
$\left\langle a,b,c \right\rangle$  where $a,b,c$ are instances of events $A$, $B$
and $C$ respectively, such that \emph{all} the above constraints are satisfied.
Modeling it as a CSP, we consider variables $X_A,X_B,X_C$ for events $A,B,C$
respectively, each of which have domains as the time points of respective event
instances, and the constraints as shown in the constraint network in
Figure~\ref{fig:cyclone}.  Considering the constraint matrix for the first
constraint, if
$a_i,a_{i^\prime} \in A$ have images $\left[b_j,b_k\right]$ and
$\left[b_{j^\prime},b_{k^\prime}\right]$ respectively in $B$ (i.e., $b_j$ to
$b_k$ satisfy the constraint for $a_i$, etc.), and $i \le i^\prime$, then $j \le
j^\prime$ and $k \le k^\prime$ (the subscripts denote the indices of the
instances in the sorted order).  This is the down staircase property as shown in
Figure~\ref{fig:staircase}(a). Similarly, the other constraints (ii) and (iii)
are also DS constraints.

Problems such as the above are common in complex event processing applications
\cite{SIGMOD2008PatternMatching,Bhargavi2010:intrusion,DistributedESPusingNFA,optimizingCEPRFID,ProbabilisticCEP,2010tesla,Shubhadip2009,CEPmulti-granularityRFID,Pietzuch2004:network,2009distributedCEPQR,Teymourian2009:stock,Vairo2010:WSN,RelativeTemporalRETE,Wang:EDBT2006,Wang:2010:SPNetwork,Wang2009:RFID,Xingyi2008:RFID,languageCEPRFID}
where the goal is to efficiently detect occurrences of complex events which are
usually represented as patterns of events sharing some temporal relationship
with each other.  Examples of complex events include state changes in business
and industrial processes, problems in enterprise systems and state changes in
the environment.  However, in most of the existing literature, due to the
simplicity of the patterns, they are represented using constructs similar to
regular expressions and the detection of complex events is realized by using
finite state automata
\cite{SIGMOD2008PatternMatching,DistributedESPusingNFA,optimizingCEPRFID,ProbabilisticCEP,2010tesla,CEPmulti-granularityRFID,Pietzuch2004:network,2009distributedCEPQR,RelativeTemporalRETE,languageCEPRFID}.
We observe that while regular expressions can capture sequential (or close to
sequential) patterns quite well, they are inadequate to capture arbitrary
pattern of temporal events sharing temporal relationships between pairs of
events. Hence, we consider a CSP based approach where the event pattern is
represented through a constraint network. In addition, we observe that the
constraints involved in such constraint networks are usually DS constraints, as
in the above example.

Another application of DSCSP is scheduling (or time-tabling) of resources (or
facilities) based on their availability. There is a variable for each resource
whose domain consists of the times of availability of the given resource. The
constraints involved are usually unary or binary. The unary constraints are
typically simple to process, and thus, domain values that do not satisfy those
are easy to eliminate. The binary constraints involve bounded temporal
intervals, which can be expressed as DS constraints. Solving this constraint
network would offer a feasible schedule for the resources. Such constraint-based
timetabling have been considered in
\cite{TimeTablePlanning,UniversityTimeTabling,ConstraintBasedTimetabling}.

\begin{figure}[t]
\centering
\includegraphics[width=\columnwidth]{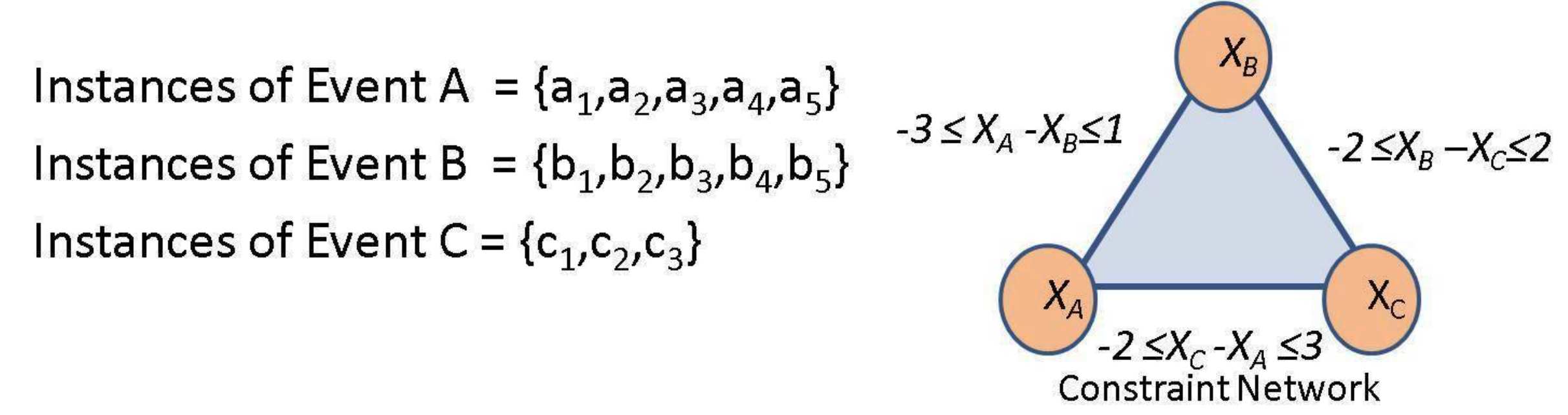}
\vspace*{-3mm}
\caption{Constraints for the example problem of cyclone identification.}
\vspace*{-3mm}
\label{fig:cyclone}
\end{figure}

\subsection{Contributions}

In this paper, we make the following contributions:
\begin{enumerate}
	\item We examine the structure and properties of GS constraints, besides
		studying its relationships with the existing constraint classes.
	\item We show that, for DS constraints, arc consistency is \emph{sufficient}
		to determine a solution.  Further, we present an optimal algorithm,
		called \emph{ACiDS}, for computing arc consistency for GS constraints in
		$O(cd)$ time and space, where there are $n$ variables, $c$ constraints
		and the largest domain size is $d$.
	\item Observing that arc consistency is \emph{not necessary} for solving a
		DSCSP, we propose a more efficient algorithm, \emph{DSCSP Solver}, for
		the same.
	\item With regard to US constraints, arc consistency is \emph{not
		sufficient} to determine a solution, and therefore, methods such as path
		consistency or variable elimination are required. Since arc consistency
		acts as a subroutine for these existing methods, replacing any known arc
		consistency algorithm by our optimal $O(cd)$ algorithm produces a more
		efficient method for solving a USCSP.
\end{enumerate}
	
\subsection{Organization}

The rest of the paper is organized as follows. Section~\ref{sec:Preliminaries}
covers the preliminary concepts of constraint satisfaction and formally defines
the class of GS constraints. Section~\ref{sec:work} outlines the related work in
this area. We next discuss the relationships of GS constraints with the existing
constraint classes and its closure properties in Section~\ref{sec:properties}.
In Section~\ref{sec:Arc-Consistency-for DSCSP}, we first show that arc
consistency is sufficient to solve a DSCSP and later present an efficient arc
consistency based algorithm, ACiDS, to solve a DSCSP. We also discuss how the
ACiDS algorithm is useful to solve a USCSP, although arc consistency is not
sufficient to solve it. In Section~\ref{sec:A faster algorithm}, we present a
more efficient algorithm, DSCSP Solver, to solve a DSCSP. Finally, we conclude
in Section~\ref{sec:conclusion}.

\section{Preliminaries\label{sec:Preliminaries}}

\subsection{CSP over GS Constraints}

A finite binary constraint satisfaction problem (CSP)
$\mathcal{P=\left(X,D,C\right)}$ is defined as a set of $n$ \emph{variables}
$\mathcal{X}=\left\{X_{1},X_{2},\dots,X_{n}\right\}$, a set of \emph{domains}
$\mathcal{D}=\left\{D_{1},D_{2},\dots,D_{n}\right\}$ where $D_{i}$ is the finite
set of possible values that can be assigned to $X_{i}$, and a set of $c$
\emph{constraints} $\mathcal{C}=\left\{C_{ij}\right\}$ where $C_{ij}$ is a
\emph{binary} constraint involving $X_{i}$ and $X_{j}$.  We assume that the
domain of each variable consists of at most $d$ distinct integers sorted in an
increasing order.  A constraint $C_{ij}$ on the ordered set of variables
$(X_{i},X_{j})$ specifies the admissible combinations of values for $X_{i}$ and
$X_{j}$. If a pair of values $(v_i,v_j)$ $(v_i\in D_i,v_j\in D_j)$ satisfy the
constraint $C_{ij}$, then we denote it as $(v_i,v_j)\in C_{ij}$.  For the sake
of simplicity, we assume that there exists \emph{at most} one constraint between
any pair of variables.   Given a constraint $C_{ij}$, the constraint $C_{ji}$
refers to a transposition of $C_{ij}$, i.e., if $(v_i,v_j)\in C_{ij}$, then
$(v_j,v_i)\in C_{ji}$.  Further we assume that all constraints are
\emph{binary}, i.e., they involve only two variables.  It was stated in
\cite{Survey92} that any $r$-ary CSP for any fixed $r \ge 1$ can be converted to
an equivalent binary CSP \cite{Rossi90equiv_CSP}. Therefore, it is reasonable to
consider CSPs with only binary constraints.  A binary constraint can always be
represented by a constraint matrix \cite{Montanari197495}, showing the
admissible combinations of values.

A CSP $\mathcal{P}$ asks whether there exists an assignment of the variables
$\mathcal{X}$ such that \emph{all} the constraints are satisfied.  If an
assignment $X_{i}=v_{i},\ \forall i$ is a solution, then $v_{i}$ is said to be a
\emph{member} of a solution.

Consider the set of vertices $\mathcal{V_P} = \left\{ V_1,V_2,\dots,V_n\right\}$
where $V_i$ corresponds to variable $X_i$. A pair of variables $(X_{i},X_{j})$
that has a constraint $C_{ij}$ is referred to by the \emph{arc} $(i,j)$ which
represents an edge between $V_i$ and $V_j$. The set of all arcs is denoted by
$arc(\mathcal{P})$. The graph induced by the vertex set $V_P$ and the edge set
$arc(\mathcal{P})$ is referred to as the \emph{constraint network} of
$\mathcal{P}$ and is denoted by $\mathcal{N_P}$.

Following \cite{CSPoverConnectedRowConvex}, a constraint $C_{ij}$ is
\emph{row-convex} if and only if in each row of the matrix representation of
$C_{ij}$, all the ones are consecutive, i.e., no two ones within a single row
are separated by a zero in that same row.  The \emph{reduced form} of a
constraint $C_{ij}$, denoted by $C_{ij}^{*}$, is obtained by removing all the
empty rows and columns in its matrix representation. The domain of $X_i$ through
the constraint $C_{ij}$, denoted by $D_i(C_{ij})$, is the set $\left\{v_i\in
D_i|\exists v_j:\left(v_i,v_j\right) \in C_{ij}\right\}$.  Suppose $C_{ij}$ is a
row-convex constraint and $v_i\in D_i(C_{ij})$. The image of $v_i$ in $C_{ij}$
is the set $\left\{v_j\in D_j|\left(v_i,v_j\right) \in C_{ij}\right\}$. Due to
row convexity of $C_{ij}$, this set can be represented as an interval
$[w_1,w_m]$ (over the domain $D_j(C_{ji})$) and we denote $w_1$ and $w_m$ by
$min(C_{ij},v_j)$ and $max(C_{ij},v_j)$ respectively.  We denote by
$succ(v_j,D_j(C_{ji}))$ and $pred(v_j,D_j(C_{ji}))$ the successor and the
predecessor of $v_j$ in $D_j(C_{ji})$ respectively. For ease of notation, we
simply use $min(v_j),max(v_j),succ(v_j)$ and $pred(v_j)$ when there is no
ambiguity on the underlying domain.

\begin{figure}[t]
\begin{center}
\includegraphics[width=0.80\columnwidth]{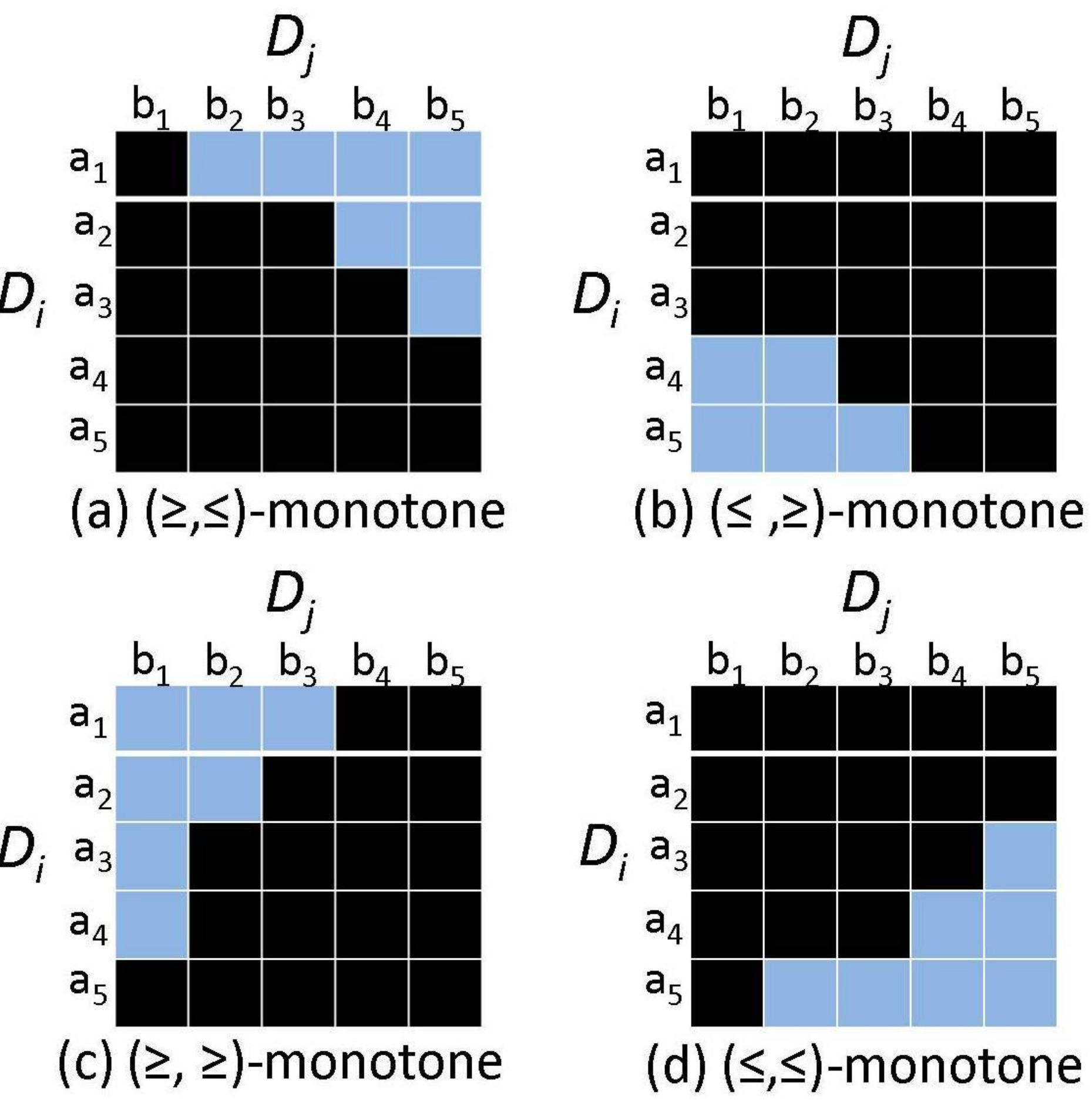}
\end{center}
\vspace*{-3mm}
\caption{$(\alpha,\beta)$-monotone constraints.}
\vspace*{-3mm}
\label{fig:alphabetamonotone}
\end{figure}

Referring to the example in Figure~\ref{fig:staircase}(a),
$min(a_3)=b_3,max(a_3)=b_4,succ(b_3)=b_4$ and $pred(b_3)=b_2$.  A row-convex
constraint $C_{ij}$ is connected row-convex (CRC) if and only if the images
$[a,b]$ and $[a^{\prime},b^{\prime}]$ of two consecutive rows in $C_{ij}^{*}$
are such that $a^\prime \leq succ(b) \wedge b^\prime \geq pred(a)$. Following
\cite{Beek95RowConvex}, a constraint $C_{ij}$ is said to be \emph{monotone} if
and only if $(v,w)\in C_{ij}$ implies $\left(v_i^\prime,v_j^\prime \right)\in
C_{ij}$ for all values $v_i,v_i^\prime \in D_i$, $v_j,v_j^\prime \in D_j$ such
that $v_i^\prime \geq v_i$ and $v_j^\prime \leq v_j$, i.e., referring to the
constraint matrix of $C_{ij}$, if $(v_i,v_j)\in C_{ij}$, then all cells to the
left of $(v_i,v_j)$ on the same row are in $C_{ij}$ and all cells below
$(v_i,v_j)$ on the same column are in $C_{ij}$.  However, a complementary
definition of monotone constraints was proposed in \cite{GenericAC5}, where a
constraint $C_{ij}$ is said to be \emph{monotone} if and only if $(v_i,v_j)\in
C_{ij}$ implies $\left(v_i^\prime,v_j^\prime \right)\in C_{ij}$ for all values
$v_i,v_i^\prime \in D_i$, $v_j,v_j^\prime \in D_j$ such that $v_i^\prime \leq
v_i$ and $v_j^\prime \geq v_j$, i.e., referring to the constraint matrix of
$C_{ij}$, if $(v_i,v_j)\in C_{ij}$, then all cells to the right of $(v_i,v_j)$
on the same row are in $C_{ij}$ and all cells above $(v_i,v_j)$ on the same
column are in $C_{ij}$. An example of each of them is shown in
Figure~\ref{fig:alphabetamonotone}(a) and Figure~\ref{fig:alphabetamonotone}(b).
Now we consider a generalization of monotone constraints, introduced in
\cite{CSPoverConnectedRowConvex}, as following.

\begin{defn}
	Let $\preceq$ and $\succeq$ be total orderings on $D_i$ and $D_j$,
	respectively. A constraint $C_{ij}$ is $(\preceq,\succeq)$-\emph{monotone}
	if and only if $(v_i,v_j)\in C_{ij}$ implies $\left(v_i^\prime,v_j^\prime
	\right)\in C_{ij}$ for all values $v_i,v_i^\prime \in D_i$, $v_j,v_j^\prime
	\in D_j$ such that $v_i^\prime \preceq v_i$ and $v_j^\prime \succeq v_j$.
	\emph{Staircase} constraints are $(\alpha,\beta)$-monotone constraints where
	$\alpha,\beta \in \{\leq,\geq\}$.
\end{defn}

Figure~\ref{fig:alphabetamonotone} shows examples of staircase constraints.

In this paper, we propose \emph{generalized staircase (GS)} constraints as a
generalization of  staircase constraints.  GS constraints are of two types, down
staircase (DS) and up staircase (US), defined as below.

\begin{defn}
	A constraint $C_{ij}$ is a \emph{down staircase (DS)} constraint if and only
	if it is row convex and for any $u,v \in D_i$ such that
	$v=succ\left(u\right)$, the following conditions (DS property) hold:
	$min\left(C_{ij},u\right) \leq min\left(C_{ij},v\right)$ and
	$max\left(C_{ij},u\right) \leq max\left(C_{ij},v\right)$. 
\end{defn}

\begin{defn}
	A constraint $C_{ij}$ is an \emph{up staircase (US)} constraint if and only
	if it is row convex and for any $u,v \in D_i$ such that
	$v=succ\left(u\right)$, the following conditions (US property) hold:
	$min\left(C_{ij},u\right) \geq min\left(C_{ij},v\right)$ and
	$max\left(C_{ij},u\right) \geq max\left(C_{ij},v\right)$.
\end{defn}

Figure~\ref{fig:staircase} shows examples of both the constraints.

Following the above definitions, it is clear that this class of GS constraints
is a strict subclass of CRC constraints. Further, $(\leq,\geq)$-monotone and
$(\geq,\leq)$-monotone constraints are strict subclasses of DS constraints
while $(\leq,\leq)$-monotone and $(\geq,\geq)$-monotone are strict subclasses
of US constraints (see Figure~\ref{fig:relationship}).  A binary CSP with all
its constraints as DS (respectively US) constraints is referred to as a
\emph{DSCSP} (respectively \emph{USCSP}).

\subsection{Arc Consistency and Path Consistency}

Following \cite{GenericAC5}, an $arc(i,j)$ is \emph{arc consistent} if and only
if $\forall v_i\in D_i, \exists v_j \in D_j$ such that $(v_i,v_j)\in C_{ij}$
($v_j$ is said to \emph{support} $v_i$). For arc consistency, we assume that
constraint $C_{ij} \in \mathcal{C}$ if and only if $C_{ji} \in \mathcal{C}$,
i.e., $arc(i,j)$ and $arc(j,i)$ are treated distinct, and that both are in
$arc(\mathcal{P})$.  If each $arc(i,j)\in arc(\mathcal{P})$ is arc consistent,
then the constraint network  $\mathcal{N_P}$ is said to be \emph{arc
consistent}.

Following \cite{CSPoverConnectedRowConvex}, a constraint network $\mathcal{N_P}$
is  \emph{path consistent} if and only if for every triple $(X_i,X_k,X_j)$ of
variables such that $arc(i,k),arc(k,j),arc(i,j)\in arc(\mathcal{P})$, for every
$v_i \in D_i$ and $v_j \in D_j$ such that $(v_i,v_j) \in C_{ij}$, there exists
$v_k \in D_k$ such that $(v_i,v_k) \in C_{ik}$ and $(v_k, v_j) \in C_{kj}$.

Two constraint networks are \emph{equivalent} if they have the same set of
solutions. Given a constraint network, the task of an arc consistency
(respectively path consistency) algorithm is to generate an equivalent
constraint network that is  arc consistent (respectively path consistent).

\section{Related Work}
\label{sec:work}

Although CSPs are NP-hard, several tractable classes of CSPs have been
identified \cite{Pearson97asurvey}.  Among the notable tractable classes of
constraints are functional, anti-functional, monotone, min-closed, max-closed and
connected row convex (CRC) constraints. In this paper, we are considering
another important tractable class of generalized staircase (GS) constraints.

In this work, we propose an algorithm named ACiDS, for computing arc consistency
over DS constraints. In this context, we note that there are several existing
arc consistency algorithms for general arbitrary constraints. The most prominent
ones include AC3 \cite{Mackworth1977:AC3}, AC4 \cite{Mohr_Henderson1986:AC4},
AC5 \cite{GenericAC5}, AC6 \cite{AC6} and AC7 \cite{AC7}. Although AC3 has
non-optimal worst case time complexity ($O(cd^3)$), it is still considered as
the basic constraint propagation algorithm owing to its simplicity. Its
successors AC4, AC5, AC6 and AC7 have optimal worst case time complexity of
$O(cd^2)$. AC5 is a generic arc consistency algorithm that offers a framework to
compute arc consistency for any constraint network and can be optimized for
certain classes of constraints.  A refinement of AC3 in the form of AC2001 was
proposed in \cite{AC2001} which has an optimal worst case time complexity of
$O(cd^2)$. None of these algorithms assume knowledge of the semantics of the
underlying constraints. Although they agree on the worst case time complexity
(besides AC3), their performance varies for different constraint classes. In
addition, the tightness and slackness of the constraints also affect the
performance for any given constraint class.  Since arc consistency is sufficient
to solve a DSCSP, as shown in Theorem~\ref{thm:ACiDS}, any DSCSP can be solved
by any of these arc consistency algorithms (AC4, AC6, AC7 or AC2001) with a
worst case time complexity of $O(cd^2)$. The arc consistency algorithm presented
in this paper, namely ACiDS, has an optimal worst case time complexity of
$O(cd)$ for the class of DS constraints, thus improving the $O(cd^2)$ bound for
solving DS constraints using the existing arc consistency algorithms. This
algorithm is a specialization of the AC5 algorithm. Similar specializations for
several classes of arithmetic constraints such as functional, anti-functional
and monotone constraints were stated in \cite{GenericAC5}. For each of these
classes, AC5 determines a solution in $O(cd)$ optimal time.

GS constraints are a subclass of CRC constraints. While path consistency is
sufficient to solve a CSP over CRC constraints, we show that arc consistency is
sufficient to solve a CSP over DS constraints (please refer to
Theorem~\ref{thm:ACiDS}). However, it is not known that
whether US constraints are solvable using arc consistency. For the class of CRC
constraints,  a path consistency based algorithm was proposed in
\cite{CSPoverConnectedRowConvex} that runs in $O(n^3d^2)$ time. Based on
variable-elimination, \cite{AAAI:2007RowConvex} stated an
$O\left(nd\sigma^{2}+cd^{2}\right)$ time algorithm for CRC constraints where
$\sigma\le n$ is the degree of elimination of the triangulated graph of the
underlying constraint network. Both these algorithms for CRC constraints use arc
consistency as a subroutine which takes $O(cd^2)$ time.  US constraints being a
subclass of CRC constraints can be solved by either of these techniques
mentioned above. Our arc consistency algorithm for GS constraints that runs in
$O(cd)$ time, acting as a subroutine, thus improves the time complexity of the
algorithm by \cite{AAAI:2007RowConvex} to $O\left(nd\sigma^{2}+cd\right)$ (which
is linear in $d$), when applied to US constraints.

Referring to the classes of max-closed and min-closed constraints,
\cite{TractableConstraints95} showed that their solutions can be computed in
$O\left(c^{2}d^{2}\triangle^{2}\right)$ time where $\triangle \le d$ is the
maximum number of supports that a value in a domain can have. In this paper, we
show that a constraint is a DS if and only if it is both max-closed and
min-closed. Our DS algorithms have significantly lower time complexities for
this restricted subclass.

\section{Properties of GS Constraints}
\label{sec:properties}

\subsection{Relationships with other Constraint Classes \label{sec:relationship}}

In this section, we discuss the relationship of GS constraints with the existing
constraint classes. Figure~\ref{fig:relationship} illustrates the space of GS
constraints. Since some of the relationships of GS constraints shown in this
figure have already been covered so far, we next explain those that have not yet
been discussed.

Following \cite{TractableConstraints95}, a constraint $C$ is \emph{max-closed}
if and only if $\forall (u,v), (u^\prime,v^\prime) \in C$, implies
$(\max\{u,u^\prime\},\max\{v,v^\prime\})\in C$. Similarly, a constraint $C$ is
\emph{min-closed} if and only if $\forall (u,v), (u^\prime,v^\prime) \in C$,
implies $(\min\{u,u^\prime\},\min\{v,v^\prime\})\in C$. The authors in
\cite{TractableConstraints95} claimed that if a constraint is both max-closed
and min-closed, then it is row convex. If a constraint is either a
$(\le,\ge)$-monotone or $(\ge,\le)$-monotone, then it is both max-closed and
min-closed. We integrate these claims in the following result.

\begin{thm} \label{thm:MinMaxClosed}
	A constraint is down staircase if and only if it is both max-closed and
	min-closed.
\end{thm}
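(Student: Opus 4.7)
My plan is to prove both directions of the biconditional separately, using the representation of a DS constraint in terms of the monotone bounds $min(C,\cdot)$ and $max(C,\cdot)$ of the row images.

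For the forward direction (DS implies both max-closed and min-closed), I take arbitrary pairs $(u,v),(u',v')\in C_{ij}$ and show the two closure conditions. Without loss of generality assume $u\le u'$ in the ordering on $D_i$. Iterating the DS property along consecutive elements between $u$ and $u'$ gives the monotonicity statements $min(C_{ij},u)\le min(C_{ij},u')$ and $max(C_{ij},u)\le max(C_{ij},u')$. For max-closedness I need $(u',\max\{v,v'\})\in C_{ij}$, so it suffices to check $\max\{v,v'\}\in[min(C_{ij},u'),max(C_{ij},u')]$; the upper bound follows from $v\le max(C_{ij},u)\le max(C_{ij},u')$ and $v'\le max(C_{ij},u')$, while the lower bound is immediate from $\max\{v,v'\}\ge v'\ge min(C_{ij},u')$. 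The symmetric argument for $(u,\min\{v,v'\})\in C_{ij}$ handles min-closedness.

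For the backward direction (both max-closed and min-closed implies DS), I first invoke the result of \cite{TractableConstraints95} cited just above the theorem statement: a constraint that is simultaneously max-closed and min-closed is row convex. It then remains to verify the two inequalities of the DS property for arbitrary $u,v\in D_i$ with $v=succ(u)$. Writing $a=min(C_{ij},u)$, $b=max(C_{ij},u)$, $a'=min(C_{ij},v)$, $b'=max(C_{ij},v)$, applying min-closedness to $(u,a)$ and $(v,a')$ gives $(u,\min\{a,a'\})\in C_{ij}$, which forces $\min\{a,a'\}\ge a$, hence $a\le a'$; applying max-closedness to $(u,b)$ and $(v,b')$ gives $(v,\max\{b,b'\})\in C_{ij}$, which forces $\max\{b,b'\}\le b'$, hence $b\le b'$. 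These are exactly the two DS inequalities.

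The main subtlety, rather than obstacle, is making sure the forward direction does not implicitly assume $u$ and $u'$ are consecutive: the DS property is stated only for consecutive elements, but one needs the monotonicity of $min(C_{ij},\cdot)$ and $max(C_{ij},\cdot)$ over the entire domain $D_i$ to conclude. This is handled by a one-line induction along the chain of successors from $u$ to $u'$ and is worth stating explicitly. The only other point requiring care is that $D_j$ may contain values lying between $v$ and $v'$ that are not in the image of $u'$, but this is irrelevant because row convexity combined with the interval endpoints derived above is enough to place $\max\{v,v'\}$ and $\min\{v,v'\}$ inside the respective images.
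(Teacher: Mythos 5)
Your proposal is correct and follows essentially the same route as the paper: the max/min-closed $\Rightarrow$ DS direction invokes the cited row-convexity result and then derives $a\le a'$, $b\le b'$ from closedness applied to the row endpoints (the paper phrases this as a contradiction, you phrase it directly), and the DS $\Rightarrow$ closed direction uses monotonicity of the image endpoints plus row convexity to place $\max\{v,v'\}$ and $\min\{v,v'\}$ in the appropriate row images, which is the same argument the paper carries out via its case split on $y\le y'$ versus $y>y'$. Your explicit remark that the consecutive-row DS property must be propagated to arbitrary $u\le u'$ by induction along successors is a point the paper glosses over, but it is not a different approach.
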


\begin{proof}
	Suppose $C_{ij}$ is a constraint that is both max-closed and min-closed, and
	therefore, row convex \cite{TractableConstraints95}. For any $u,u^\prime\in
	D_i$ such that $u^\prime = succ(u)$, the images of $u,u^\prime$ in $D_j$ are
	intervals, say $[v,w]$ and $[v^\prime,w^\prime]$.  Since $u < u^\prime$, we
	claim that $v \le v^\prime$ and $w \le w^\prime$.  If $v^\prime<v$, then
	using the min-closed property $(u,v^\prime)\in C_{ij}$. This contradicts the
	assumption that the image of $u$ in $D_j$ is $[v,w]$. If $w^\prime <w$, then
	using the max-closed property $(u^\prime,w) \in C_{ij}$, thus contradicting
	the assumption that $u^\prime$ has the image $[v^\prime,w^\prime]$ in $D_j$.
	Hence, $C_{ij}$ must be a down staircase constraint.

	Now, suppose $C_{ij}$ is a down staircase constraint.  From the down
	staircase property, for any $u,u^\prime\in D_i$ such that $u<u^\prime $, if
	the images of $u,u^\prime$ in $D_j$ are $[v,w]$ and $[v^\prime,w^\prime]$,
	then $v\le v^\prime$ and $w\le w^\prime$. If $w \le v^\prime$, both
	max-closed and min-closed properties are trivially satisfied.
	
	Next, consider the case $v\le v^\prime<w \le w^\prime$, and any $y,y^\prime
	\in D_j$ such that $v\le y \le w$ and $v^\prime \le y^\prime \le w^\prime$.
	If $y \le y^\prime$, then the min-closed and max-closed properties are
	trivially satisfied for any pair $(u,y),(u^\prime,y^\prime) \in C_{ij}$.
	Hence, assume $y>y^\prime$.  However, such $y^\prime$ exists only for
	$succ(v^\prime) \le y \le w$. Observe that $(u,y^\prime)\in C_{ij}$ for
	$v^\prime \le y^\prime<y$. Also $(u^\prime,y)\in C_{ij}$ for any
	$v^\prime\le y^\prime < y \le w$. Hence, both min-closed and max-closed
	properties are satisfied for any pair $(u,y),(u^\prime,y^\prime) \in
	C_{ij}$. 
\end{proof}

\begin{figure}[t]
\begin{center}
\includegraphics[width=0.90\columnwidth]{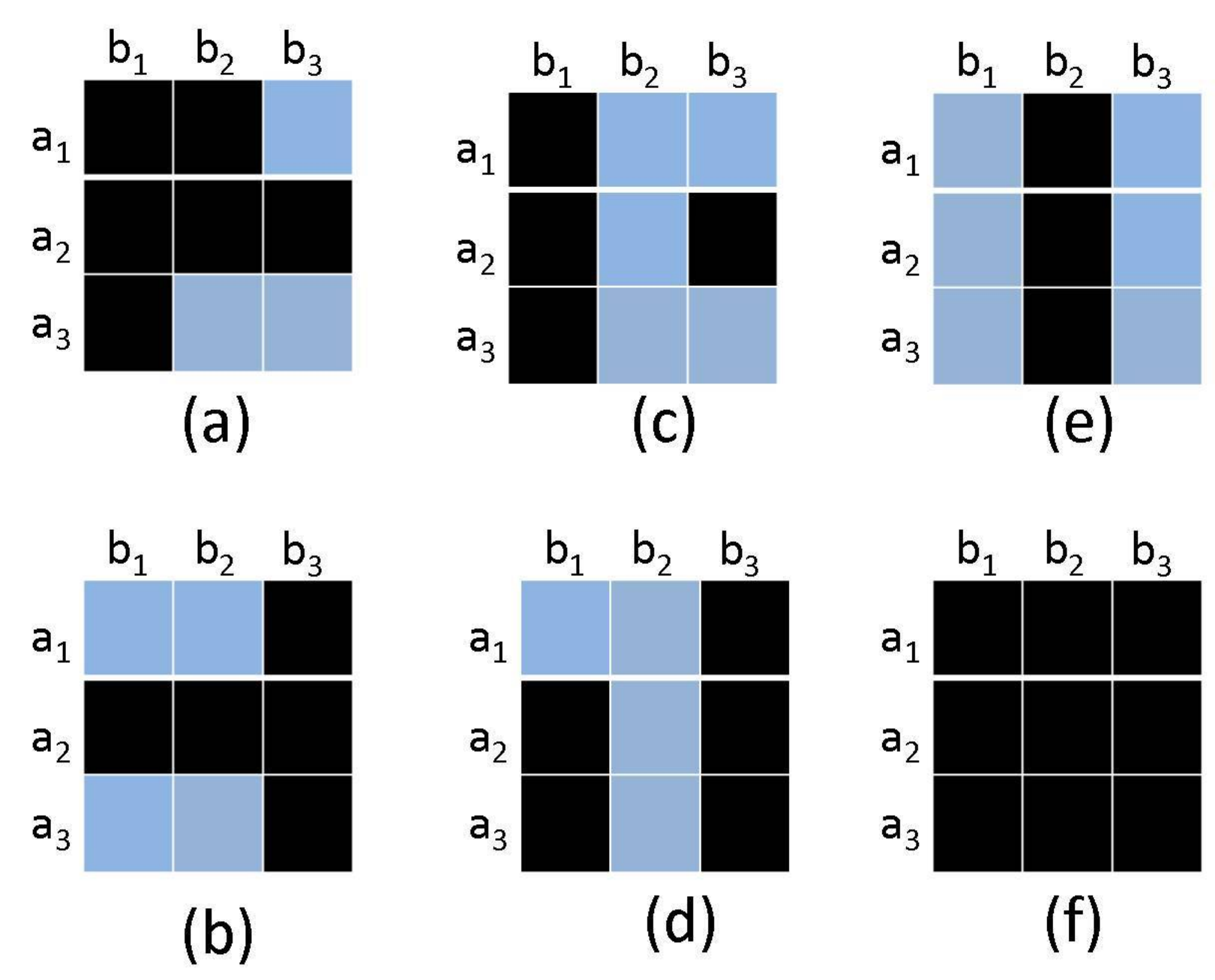}
\end{center}
\vspace*{-3mm}
\caption{Examples of max-closed and min-closed constraints.}
\vspace*{-3mm}
\label{fig:excons}
\end{figure}

Figure~\ref{fig:relationship} shows the intersection space of max-closed and
min-closed constraints as DS constraints. Now we consider some of the other
relationships shown in the figure. We note that there are CRC constraints that
are not GS, but min-closed (Figure~\ref{fig:excons}(a)), or max-closed
(Figure~\ref{fig:excons}(b)).  From Figure~\ref{fig:excons}(c) and
Figure~\ref{fig:excons}(d), we also see the existence of constraints that are
either min-closed or max-closed, but not CRC.  Figure~\ref{fig:excons}(e) and
Figure~\ref{fig:excons}(f) illustrate existence  of GS constraints that are both
DS and US. While Figure~\ref{fig:excons}(e) is not a staircase constraint,
Figure~\ref{fig:excons}(f) is staircase.  Figure~\ref{fig:alphabetamonotone}
shows that any staircase constraint is either max-closed, or min-closed, or
both. This implies that any US constraint that is also a staircase constraint,
is either max-closed or min-closed, but not both.  We also claim that if a US
constraint is either a max-closed or a min-closed constraint, it must be a
staircase constraint. However, we note that there are US constraints that are
neither max-closed nor min-closed.  Figure~\ref{fig:staircase}(b)  illustrates
this fact.

\subsection{Closure Properties of GS Constraints \label{sec:closure}}

Following \cite{CSPoverConnectedRowConvex}, we know that CRC constraints are
closed under the following operations: \emph{transposition}, \emph{intersection}
and \emph{composition}. Transposition of the constraint matrix implies that if
$(v_i,v_j) \in C_{ij}$, then $(v_j,v_i) \in C_{ji}$.  Intersection of two binary
constraints $C_{ij}$ and $C_{ij}^\prime$ implies that $(v_i,v_j) \in C_{ij} \cap
C_{ij}^\prime$ if and only if $(v_i,v_j)\in C_{ij} \wedge (v_i,v_j)\in
C_{ij}^\prime$. Composition refers to multiplication of two constraint matrices,
i.e., given two binary constraints $C_{ij}$ and $C_{jk}$, the constraint
$C_{ik}=C_{ij}\times C_{jk}$ is such that $(v_i,v_k)\in C_{ik}$ if and only if
$\exists v_j|(v_i,v_j)\in C_{ij}\wedge (v_j,v_k)\in C_{jk}$.

These closure properties are necessary for path consistency which ensures
tractability of CRC constraints.  In this section, we discuss the closure
properties for the subclasses of GS constraints, namely DS and US constraints.
We show that DS constraints are closed under transposition, intersection and
composition. US constraints are, however, closed under transposition and
intersection, but not under composition. In fact composition of two US
constraints result in a DS constraint. 

\begin{lem}
DS constraints are closed under transposition.
\end{lem}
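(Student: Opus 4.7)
The plan is to reduce the claim to Theorem~\ref{thm:MinMaxClosed}, which characterizes DS constraints as precisely those that are simultaneously max-closed and min-closed. Since both of these closure properties are stated completely symmetrically in the two coordinates, they are manifestly preserved under swapping the two arguments, and the lemma will follow essentially immediately.

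Concretely, I would assume $C_{ij}$ is DS, so by Theorem~\ref{thm:MinMaxClosed} it is both max-closed and min-closed. Take any $(a,b),(a',b') \in C_{ji}$; by the definition of transposition, $(b,a),(b',a') \in C_{ij}$. Applying the max-closed property of $C_{ij}$ yields $(\max\{b,b'\},\max\{a,a'\}) \in C_{ij}$, which transposes back to $(\max\{a,a'\},\max\{b,b'\}) \in C_{ji}$; hence $C_{ji}$ is max-closed. The identical argument with $\min$ in place of $\max$ shows $C_{ji}$ is min-closed. A second invocation of Theorem~\ref{thm:MinMaxClosed}, now in the reverse direction, then delivers that $C_{ji}$ is DS.

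As a more elementary alternative in case one prefers to argue directly from the definition, I would verify the two DS requirements for $C_{ji}$ separately. Row convexity of $C_{ji}$ follows from the observation that the column set $\{v_i : (v_i,v_j) \in C_{ij}\}$ equals $\{v_i : \min(C_{ij},v_i) \le v_j \le \max(C_{ij},v_i)\}$, which is the intersection of a prefix and a suffix in $D_i$ because $v_i \mapsto \min(C_{ij},v_i)$ and $v_i \mapsto \max(C_{ij},v_i)$ are both non-decreasing. The staircase monotonicity of $\min(C_{ji},\cdot)$ and $\max(C_{ji},\cdot)$ then follows from the same non-decreasing behaviour: as $v_j$ increases, both the least and the greatest $v_i$ whose image under $C_{ij}$ contains $v_j$ can only shift rightward.

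There is no real obstacle here; the crux is simply the observation that max-closure and min-closure are coordinate-symmetric conditions, which is exactly what makes transposition trivially preserve them.
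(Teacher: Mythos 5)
Your main argument is correct and is essentially the paper's own proof: both reduce the claim to Theorem~\ref{thm:MinMaxClosed}, observe that max-closedness and min-closedness are preserved under swapping coordinates, and then apply the theorem in the reverse direction to conclude that the transpose is DS. The direct argument you sketch as an alternative is a fine bonus but not needed.
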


\begin{proof}
Suppose $C_{ij}$ is a DS constraint, and $C_{ji}$ is its transpose.

From Theorem \ref{thm:MinMaxClosed}, we know that $C_{ij}$ is both min-closed
and max-closed. Hence, for any two tuples $(a,b),(c,d) \in C_{ij}$,
$(\min\{a,c\},\min\{b,d\})\in C_{ij}$ and $(\max\{a,c\},\max\{b,d\})\in C_{ij}$.
From the definition of transposition, for every $(u,v)\in C_{ij}$, $(v,u) \in
C_{ji}$. Therefore, for any two tuples $(b,a),(d,c) \in C_{ji}$,
$(\min\{b,d\},\min\{a,c\})\in C_{ji}$ and $(\max\{b,d\},\max\{a,c\})\in C_{ji}$.
Hence, $C_{ji}$ is both min-closed and max-closed. From Theorem
\ref{thm:MinMaxClosed}, we conclude that $C_{ji}$ is a DS constraint.
\end{proof}

\begin{lem}
DS constraints are closed under intersection.
\end{lem}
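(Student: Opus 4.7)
The plan is to reduce the statement to Theorem~\ref{thm:MinMaxClosed}, which characterizes DS constraints as precisely those that are simultaneously max-closed and min-closed. Once we have that characterization, closure under intersection becomes a near-immediate set-theoretic exercise, because being max-closed (resp.\ min-closed) is a universally quantified property over pairs of tuples lying in the constraint, and such properties are preserved under taking subsets of relations --- in particular under intersection.

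Concretely, I would first let $C_{ij}$ and $C_{ij}^\prime$ be DS constraints on the same ordered pair of variables, and by Theorem~\ref{thm:MinMaxClosed} assume both are min-closed and max-closed. Pick any two tuples $(a,b),(c,d)\in C_{ij}\cap C_{ij}^\prime$. Because both tuples lie in $C_{ij}$, min-closedness of $C_{ij}$ yields $(\min\{a,c\},\min\{b,d\})\in C_{ij}$; the same argument applied to $C_{ij}^\prime$ shows $(\min\{a,c\},\min\{b,d\})\in C_{ij}^\prime$, so the pair lies in the intersection. A symmetric argument handles the max. Hence $C_{ij}\cap C_{ij}^\prime$ is both min-closed and max-closed, and invoking Theorem~\ref{thm:MinMaxClosed} in the other direction closes the proof.

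There is no real obstacle here; the only subtlety worth a sentence is the degenerate case in which the intersection is empty, where both closure properties hold vacuously and the empty relation is trivially row convex, so the conclusion of Theorem~\ref{thm:MinMaxClosed} still applies. The proof is thus short and parallels the structure of the preceding transposition lemma, again exploiting the min/max characterization rather than arguing directly from the shift condition on consecutive row-images.
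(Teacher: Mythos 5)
Your proof is correct, but it takes a different route from the paper's. The paper proves this lemma directly from the staircase structure: it writes the images of two rows $u<v$ in the intersection $A_{ij}=C_{ij}\cap C_{ij}^\prime$ as intersections of intervals, invokes closure of CRC constraints under intersection to get that $A_{ij}$ is CRC, and then derives a contradiction from either $min(A_{ij},u)>min(A_{ij},v)$ or $max(A_{ij},u)>max(A_{ij},v)$ using $\max\{a,a^\prime\}\le\max\{c,c^\prime\}$ and $\min\{b,b^\prime\}\le\min\{d,d^\prime\}$. You instead reduce everything to Theorem~\ref{thm:MinMaxClosed}: min-closedness and max-closedness are algebraic closure conditions, and for two tuples in the intersection the witness tuple $(\min\{a,c\},\min\{b,d\})$ (resp.\ the max tuple) lies in each of $C_{ij}$ and $C_{ij}^\prime$ separately, hence in the intersection; applying the characterization in the reverse direction finishes the argument. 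This is exactly the strategy the paper itself uses for the \emph{transposition} lemma, so your proof is arguably more uniform and shorter; what the paper's direct argument buys in exchange is an explicit description of the row images of $A_{ij}$ as intersections of intervals, which supports the later remark that these closure operations are computable in $O(d)$ time and space. Your handling of the empty intersection is also fine, since all conditions hold vacuously.

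One small inaccuracy in your framing: being min-closed or max-closed is \emph{not} preserved under taking subsets of relations (the required witness tuple may be dropped by the subset); it is preserved under intersections precisely because the witness belongs to both operands. Your concrete argument in the second paragraph is the correct one and does not rely on the faulty general principle, but the sentence claiming preservation under subsets should be deleted or rephrased.
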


\begin{proof}
Suppose $C_{ij},C_{ij}^\prime$ are two DS constraints on the same domains $D_i$
and $D_j$. Assume that $A_{ij}=C_{ij}\cap C_{ij}^\prime$ denote the intersection
of $C_{ij}$ and $C_{ij}^\prime$. Suppose $A_{ij}$ is not a DS constraint.

Consider any two non-empty rows of $A_{ij}$, $u,v \in D_i$ such that $u<v$.
Suppose $u,v$ have the respective images $[a,b],[c,d]$ in $C_{ij}$. Similarly,
let $u,v$ have images $[a^\prime,b^\prime],[c^\prime,d^\prime]$ respectively in
$C_{ij}^\prime$. Note that $min(A_{ij},u)=\max\{a,a^\prime\}$ and
$max(A_{ij},u)=\min\{b,b^\prime\}$. Similarly,
$min(A_{ij},v)=\max\{c,c^\prime\}$ and $max(A_{ij},v)=\min\{d,d^\prime\}$.

From the closure properties of CRC constraints in
\cite{CSPoverConnectedRowConvex}, $A_{ij}$ must be a CRC constraint. For
$C_{ji}$ not to be a DS constraint, either of the following two cases must hold:
\\
(1)~$min(A_{ij},u)>min(A_{ij},v)$. From the DS property of $C_{ij}$, $a \le c$.
Similarly, $a^\prime \le c^\prime$. This, however, contradicts
$min(A_{ij},u)=\max\{a,a^\prime\}>\max\{c,c^\prime\}=min(A_{ij},v)$. \\
(2)~$max(A_{ij},u)>max(A_{ij},v)$. From the DS property of $C_{ij}$, $b \le d$
and $b^\prime \le d^\prime$. This, however, contradicts
$max(A_{ij},u)=\min\{b,b^\prime\}>\min\{d,d^\prime\}=max(A_{ij},v)$.
\end{proof}

\begin{lem}
US constraints are closed under transposition.
\end{lem}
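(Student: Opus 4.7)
The plan is to proceed directly from the definition of US constraints, since the min-closed/max-closed characterization used for DS constraints (Theorem~\ref{thm:MinMaxClosed}) is unavailable here: US constraints are in general neither min-closed nor max-closed, as illustrated by Figure~\ref{fig:staircase}(b). First, because every US constraint is CRC and the class of CRC constraints is closed under transposition \cite{CSPoverConnectedRowConvex}, the transpose $C_{ji}$ is immediately row convex, which discharges the first half of the US definition. It then remains to verify the staircase inequalities on consecutive non-empty rows of $C_{ji}$.

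For this, I would pick any $p,q \in D_j(C_{ji})$ with $q = succ(p)$ and let $[a,b]$ and $[c,d]$ denote their images in $C_{ji}$; equivalently, $a,b$ are the smallest and largest rows of $C_{ij}$ whose image contains $p$, and $c,d$ are the corresponding rows for $q$. Since the US property of $C_{ij}$ (iterated between consecutive elements of $D_i(C_{ij})$) gives $min(C_{ij},u) \ge min(C_{ij},v)$ and $max(C_{ij},u) \ge max(C_{ij},v)$ whenever $u \le v$, it suffices to argue $a \ge c$ and $b \ge d$.

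I would establish $a \ge c$ by contradiction: assuming $a < c$, the US property yields $max(C_{ij},a) \ge max(C_{ij},c) \ge q$, while $(a,p) \in C_{ij}$ gives $min(C_{ij},a) \le p < q$, so by row convexity of the $a$-th row, $q$ lies inside its image, i.e.\ $(a,q) \in C_{ij}$. But then $a$ belongs to the image of $q$ in $C_{ji}$, namely $[c,d]$, forcing $a \ge c$ and contradicting the assumption. The inequality $b \ge d$ is symmetric: assuming $b < d$ and using $min(C_{ij},b) \ge min(C_{ij},d)$ together with $(b,p) \in C_{ij}$ and $(d,q) \in C_{ij}$ places $p$ in the image of the $d$-th row, hence $(d,p) \in C_{ij}$, so $d \in [a,b]$ and $d \le b$, again a contradiction.

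The main obstacle is purely conceptual: recognizing that the short route via Theorem~\ref{thm:MinMaxClosed} used in the DS case is unavailable, and that one must instead manipulate row/column images directly. Once the problem is set up in terms of the extremal rows $a,b,c,d$ of $C_{ij}$, the two contradiction arguments are routine combinations of US monotonicity with row convexity and should pose no further difficulty.
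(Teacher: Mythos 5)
Your proof is correct and follows essentially the same route as the paper's: both invoke CRC closure under transposition to get row convexity of $C_{ji}$, then handle the two staircase inequalities by contradiction, combining the US monotonicity of $C_{ij}$ with row convexity of its rows. The only cosmetic difference is where the contradiction lands --- you contradict the extremality of $c$ (resp.\ $d$) in the image of $q$ (resp.\ $p$) under $C_{ji}$, while the paper derives $max(C_{ij},a) < max(C_{ij},c)$ (resp.\ $min(C_{ij},b) < min(C_{ij},d)$) and contradicts the US property of $C_{ij}$ directly.
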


\begin{proof}
Suppose $C_{ij}$ is a US constraint, and $C_{ji}$ is its transpose.
Suppose $C_{ji}$ is not a US constraint.

Consider any two rows of matrix representation of $C_{ji}$, say $u,v \in D_j$
such that $u<v$. Suppose the images of $u,v$ in $C_{ji}$ are $[a,b]$ and $[c,d]$
respectively where $a,b,c,d\in D_i$. From the closure properties of CRC
constraints in  \cite{CSPoverConnectedRowConvex}, $C_{ji}$ must be a CRC
constraint. For $C_{ji}$ not to be a US constraint, either of the following two
cases must hold: \\
(1)~$a<c$. From the definition of $max$ values, $v \le max(c)$. Since
$a<c=min(v)$ and $C_{ij}$ is row convex, $max(a)<v$. From the above relations,
it follows that $max(a)< max(c)$. This, however, contradicts the US property of
$C_{ij}$. \\
(2)~$b<d$. From the definition of $min$ values, $min(b) \le u$. Since
$b=max(u)<d$ and $C_{ij}$ is row convex, $u<min(d)$. From the above relations,
it follows that $min(b)< min(d)$. This, however, contradicts the US property of
$C_{ij}$. 
\end{proof}

\begin{lem}
US constraints are closed under intersection.
\end{lem}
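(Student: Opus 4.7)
The plan is to mirror the structure of the preceding lemma establishing that DS constraints are closed under intersection. Let $C_{ij}$ and $C_{ij}^\prime$ be two US constraints on the same domains $D_i, D_j$, and set $A_{ij} = C_{ij} \cap C_{ij}^\prime$. First, I would invoke the closure of CRC constraints under intersection from \cite{CSPoverConnectedRowConvex} to conclude that $A_{ij}$ is row convex, so each non-empty row of $A_{ij}$ has a well-defined image interval. This lets me focus solely on verifying the two monotonicity inequalities in the US property.

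Next, for any two non-empty rows $u, v$ of $A_{ij}$ with $u < v$, let $[a,b]$ and $[c,d]$ be the images of $u,v$ in $C_{ij}$, and let $[a^\prime,b^\prime], [c^\prime,d^\prime]$ be the images in $C_{ij}^\prime$. A direct calculation gives $min(A_{ij},u) = \max\{a,a^\prime\}$, $max(A_{ij},u) = \min\{b,b^\prime\}$, and analogously for $v$. From the US property applied separately to $C_{ij}$ and $C_{ij}^\prime$ we obtain $a \ge c$, $a^\prime \ge c^\prime$, $b \ge d$, and $b^\prime \ge d^\prime$. Taking maxima and minima componentwise then yields $\max\{a,a^\prime\} \ge \max\{c,c^\prime\}$ and $\min\{b,b^\prime\} \ge \min\{d,d^\prime\}$, which are precisely the two required inequalities. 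Following the stylistic pattern of the earlier intersection lemma, I would phrase this as a proof by contradiction: assume $A_{ij}$ is not US and split into the two cases $min(A_{ij},u) < min(A_{ij},v)$ and $max(A_{ij},u) < max(A_{ij},v)$, each of which is immediately contradicted by the chain of inequalities above.

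The main subtlety, rather than any hard step, is the handling of rows of $A_{ij}$ that are non-empty but for which the corresponding rows of $C_{ij}$ or $C_{ij}^\prime$ might be considered separately: since row convexity of $A_{ij}$ is already guaranteed by CRC closure, I only need the US property between non-empty rows of $A_{ij}$, and the individual US properties of $C_{ij}$ and $C_{ij}^\prime$ transfer directly as above. I do not expect any real obstacle here; the argument is essentially a mechanical componentwise comparison, parallel to the DS case and considerably shorter than the transposition lemma for US constraints proved just before.
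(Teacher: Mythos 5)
Your proposal is correct and follows essentially the same route as the paper's proof: invoke CRC closure under intersection, express $min(A_{ij},\cdot)$ and $max(A_{ij},\cdot)$ as the componentwise $\max$ of the lower endpoints and $\min$ of the upper endpoints, and derive the two US inequalities (equivalently, refute the two contradiction cases) from the US property of $C_{ij}$ and $C_{ij}^\prime$. No gaps to report.
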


\begin{proof}
Suppose $C_{ij},C_{ij}^\prime$ are two US constraints on the same domains $D_i$
and $D_j$. Assume that $A_{ij}=C_{ij}\cap C_{ij}^\prime$ denote the intersection
of $C_{ij}$ and $C_{ij}^\prime$. Suppose $A_{ij}$ is not a US constraint.

Consider any two non-empty rows of $A_{ij}$, $u,v \in D_i$ such that $u<v$.
Suppose $u,v$ have images $[a,b],[c,d]$ respectively in $C_{ij}$ and
$[a^\prime,b^\prime],[c^\prime,d^\prime]$ respectively in $C_{ij}^\prime$. Note
that $min(A_{ij},u)=\max\{a,a^\prime\}$ and $max(A_{ij},u)=\min\{b,b^\prime\}$.
Similarly, $min(A_{ij},v)=\max\{c,c^\prime\}$ and
$max(A_{ij},v)=\min\{d,d^\prime\}$.

From the closure properties of CRC constraints in
\cite{CSPoverConnectedRowConvex}, $A_{ij}$ must be a CRC constraint. For
$C_{ji}$ not to be a US constraint, either of the following two cases must hold:
\\
(1)~$min(A_{ij},u)<min(A_{ij},v)$. From the US property of $C_{ij}$, $a \ge c$.
Similarly, $a^\prime \ge c^\prime$. This, however, contradicts
$min(A_{ij},u)=\max\{a,a^\prime\}<\max\{c,c^\prime\}=min(A_{ij},v)$. \\
(2)~$max(A_{ij},u)<max(A_{ij},v)$. From the US property of $C_{ij}$, $b \ge d$
and $b^\prime \ge d^\prime$. This, however, contradicts
$max(A_{ij},u)=\min\{b,b^\prime\}<\min\{d,d^\prime\}=max(A_{ij},v)$.
\end{proof}

Next we show that the composition of two DS constraints result in another DS
constraint, as illustrated by an example in Figure~\ref{fig:compositionDS}. 

\begin{figure}[t]
\begin{center}
\includegraphics[width=\columnwidth]{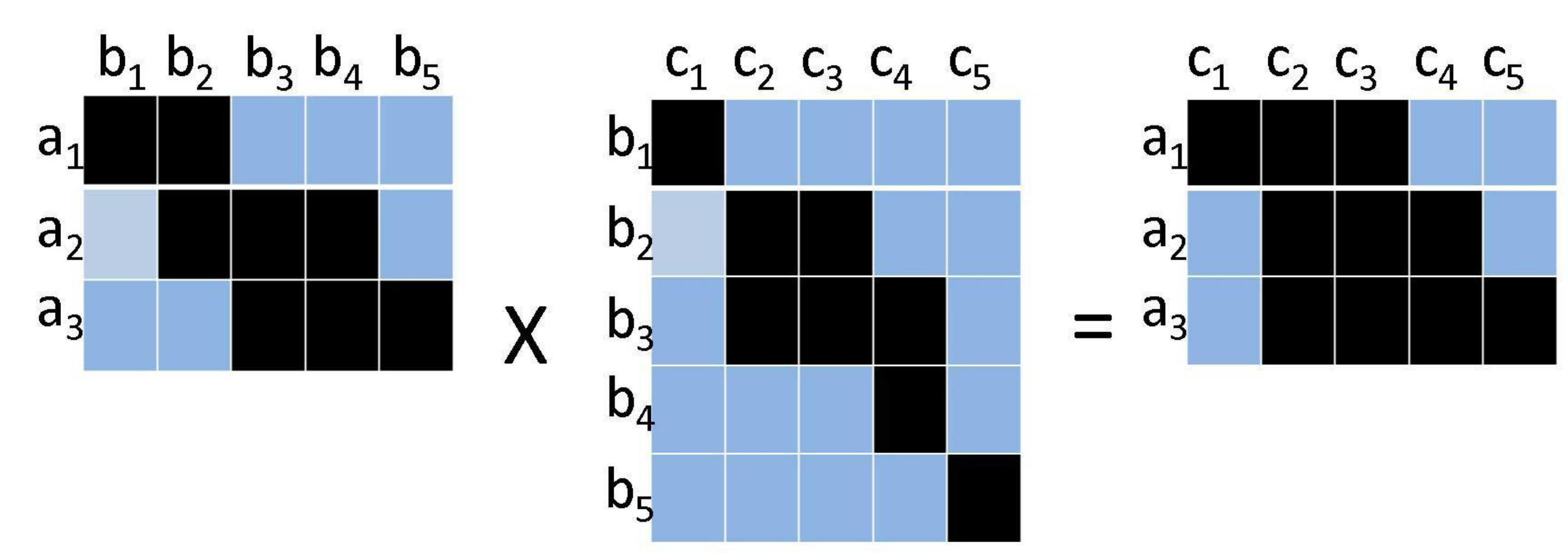}
\end{center}
\vspace*{-3mm}
\caption{Composition of two DS constraints.}
\vspace*{-3mm}
\label{fig:compositionDS}
\end{figure}

\begin{lem}
DS constraints are closed under composition.
\end{lem}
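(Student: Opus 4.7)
The plan is to invoke Theorem~\ref{thm:MinMaxClosed} and reduce the claim to showing that max-closedness and min-closedness are each preserved under composition. That is, if $C_{ij}$ and $C_{jk}$ are DS, then both are max-closed and min-closed; I will argue that $C_{ik} = C_{ij} \times C_{jk}$ inherits both properties, and then appeal back to Theorem~\ref{thm:MinMaxClosed} to conclude that $C_{ik}$ is DS. (Note that $C_{ik}$ is automatically CRC, since CRC constraints are closed under composition by the result of \cite{CSPoverConnectedRowConvex} cited at the start of Section~\ref{sec:closure}, so no separate row-convexity argument is needed.)

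The core step is a short witness argument. Suppose $(a,c), (a',c') \in C_{ik}$. By definition of composition, there exist $b, b' \in D_j$ with $(a,b), (a',b') \in C_{ij}$ and $(b,c), (b',c') \in C_{jk}$. For max-closedness, I would apply the max-closed property of $C_{ij}$ to get $(\max\{a,a'\}, \max\{b,b'\}) \in C_{ij}$, then apply the max-closed property of $C_{jk}$ to get $(\max\{b,b'\}, \max\{c,c'\}) \in C_{jk}$. Composing through the witness $\max\{b,b'\}$ yields $(\max\{a,a'\}, \max\{c,c'\}) \in C_{ik}$. The min-closed case is symmetric, using $\min\{b,b'\}$ as the intermediate witness.

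Combining the two closure properties with Theorem~\ref{thm:MinMaxClosed} immediately gives that $C_{ik}$ is DS, completing the proof.

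I do not expect a serious obstacle here: the only subtlety is ensuring that the \emph{same} intermediate witness can be used on both sides of the composition, which is exactly what max-closedness (resp.\ min-closedness) of $C_{ij}$ and $C_{jk}$ delivers. An alternative, more direct path would be to take consecutive rows $u, v \in D_i$ and chase the images through $C_{ij}$ into $C_{jk}$, verifying the $\min$ and $\max$ inequalities of the DS property explicitly; this works but is notationally heavier and essentially re-proves the max/min-closed-preservation lemma in disguise, so the approach via Theorem~\ref{thm:MinMaxClosed} is preferable.
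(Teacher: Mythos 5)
Your proof is correct, but it takes a different route from the paper. You reduce the claim to Theorem~\ref{thm:MinMaxClosed} and then show, via the witness argument ($\max\{b,b'\}$, resp.\ $\min\{b,b'\}$, serving as the intermediate value), that max-closedness and min-closedness are each preserved under composition; this is sound, and it mirrors how the paper itself handles closure of DS under transposition. The paper instead argues directly on the constraint matrices: for a row $u$ of $A_{ij}$ with image $[a,b]$, it identifies the image of $u$ in $C_{ik}=A_{ij}\times B_{jk}$ as $[\min(B_{jk},a),\max(B_{jk},b)]$ and then verifies the two DS inequalities for $u<v$ using the DS property of $A_{ij}$ followed by that of $B_{jk}$. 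Your abstract argument is shorter and sidesteps having to argue that the composed images are intervals (row convexity comes for free from the max/min-closed characterization), whereas the paper's direct computation has the side benefit of exhibiting the rows of the composed constraint explicitly, which is what the paper later leans on when it remarks that composition of GS constraints can be carried out in $O(d)$ time and space; your version does not by itself yield that constructive content.
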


\begin{proof}
Suppose $C_{ik}=A_{ij} \times B_{jk}$ is the composition of two DS constraints
$A_{ij}$ and $B_{jk}$ with all non-empty rows and columns.

Suppose any $u \in D_i$ has image $[a,b]$ in $A_{ij}$. We claim that if
$c=min(B_{jk},a)$ and $d=max(B_{jk},b)$, then the image of $u$ in $C_{ik}$ is
$[c,d]$.

Consider any $v\in D_i$ such that $v>u$. By a similar argument, if $v$ has an
image $[a^\prime,b^\prime]$ in $A_{ij}$, then the image of $v$ in $C_{ik}$ is
$[c^\prime,d^\prime]$ where $c^\prime=min(B_{jk},a^\prime)$ and
$d^\prime=max(B_{jk},b^\prime)$.

Since $v>u$ and $A_{ij}$ is a DS constraint, $a^\prime \ge a$ and $b^\prime \ge
b$. Using these relations and the fact that $B_{jk}$ is a DS constraint, we
infer that $c\le c^\prime$ and $d \le d^\prime$.  Thus, for any $u,v \in D_i$
such that $u<v$, $min(C_{ik},u)\le min(C_{ik},v)$ and $max(C_{ik},u) \le
max(C_{ik},v)$. Hence, $C_{ik}$ is a DS constraint.
\end{proof}

Next, we show that composition of two US constraints result in a DS constraint,
as illustrated by an example in Figure~\ref{fig:compositionUS}.

\begin{lem}
Composition of US constraints results in a DS constraint.
\end{lem}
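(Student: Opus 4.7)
The plan is to mirror the structure of the preceding lemma (composition of DS constraints yields a DS constraint), but to track carefully how the two direction-reversals of US composition interact. Let $C_{ik} = A_{ij} \times B_{jk}$ where $A_{ij}$ and $B_{jk}$ are US constraints. Since US constraints are a subclass of CRC and CRC is closed under composition, $C_{ik}$ is at least CRC, and in particular row convex. So the only remaining work is to verify the two monotonicity inequalities of the DS property.

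First I would derive an explicit formula for the image of any $u \in D_i$ in $C_{ik}$. If the image of $u$ in $A_{ij}$ is $[a,b]$, then the image of $u$ in $C_{ik}$ is the union $\bigcup_{j \in [a,b] \cap D_j(B_{jk})} [\,min(B_{jk},j),\,max(B_{jk},j)\,]$. Because $B_{jk}$ is US, as $j$ increases from $a$ to $b$ both $min(B_{jk},j)$ and $max(B_{jk},j)$ weakly decrease. Hence the smallest element of this union is $min(B_{jk},b)$ and the largest is $max(B_{jk},a)$; row convexity (from CRC closure) guarantees the union is actually an interval, so the image of $u$ in $C_{ik}$ is exactly $[\,min(B_{jk},b),\,max(B_{jk},a)\,]$. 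Note the flip: the endpoints of the composed image are governed by the \emph{opposite} endpoints of $[a,b]$.

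Next I would take consecutive $u, v \in D_i$ with $v = succ(u)$ and suppose their images in $A_{ij}$ are $[a,b]$ and $[a',b']$. The US property of $A_{ij}$ gives $a \ge a'$ and $b \ge b'$. Applying the formula above, the image of $v$ in $C_{ik}$ is $[\,min(B_{jk},b'),\,max(B_{jk},a')\,]$. Now use the US property of $B_{jk}$: since $b \ge b'$, $min(B_{jk},b) \le min(B_{jk},b')$; and since $a \ge a'$, $max(B_{jk},a) \le max(B_{jk},a')$. These are precisely $min(C_{ik},u) \le min(C_{ik},v)$ and $max(C_{ik},u) \le max(C_{ik},v)$, i.e., the DS property. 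Hence $C_{ik}$ is DS.

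The main obstacle is justifying the image characterization $[\,min(B_{jk},b),\,max(B_{jk},a)\,]$ rigorously, in particular that the union of the row images of $B_{jk}$ over $j \in [a,b]$ is a single interval with those endpoints. The endpoints follow from the US monotonicity of $B_{jk}$, but one must appeal either to the CRC closure of composition or to the connectedness of consecutive row images in $B_{jk}$ to rule out internal gaps; I would cite the CRC closure result from \cite{CSPoverConnectedRowConvex} already used in the DS case so that the argument stays short and parallel to the preceding proof.
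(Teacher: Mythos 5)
Your proof takes essentially the same route as the paper: both establish that the image of $u$ in $C_{ik}$ is $[\,min(B_{jk},b),\,max(B_{jk},a)\,]$ (the endpoint ``flip''), then combine the US monotonicity of $A_{ij}$ and $B_{jk}$ to obtain $min(C_{ik},u)\le min(C_{ik},v)$ and $max(C_{ik},u)\le max(C_{ik},v)$. Your extra justification of the image characterization (union of row images plus CRC closure for intervality) is a detail the paper simply asserts, so the argument is correct and matches the paper's approach.
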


\begin{proof}
Suppose $C_{ik}=A_{ij} \times B_{jk}$ is the composition of two US constraints
$A_{ij}$ and $B_{jk}$ with all non-empty rows and columns.

Suppose any $u\in D_i$ has image $[a,b]$ in $A_{ij}$. We claim that if
$c=min(B_{jk},b)$ and $d=max(B_{jk},a)$, then the image of $u$ in $C_{ik}$ is
$[c,d]$.

Consider any $v\in D_i$ such that $v>u$. By a similar argument, if $v$ has an
image $[a^\prime,b^\prime]$ in $A_{ij}$, then the image of $v$ in $C_{ik}$ is
$[c^\prime,d^\prime]$ where $c^\prime=min(B_{jk},b^\prime)$ and
$d^\prime=max(B_{jk},a^\prime)$.

Since $v>u$, and $A_{ij}$ is an US constraint, $a^\prime \le a$ and $b^\prime
\le b$. Using these relations and the fact that $B_{jk}$ is an US constraint, we
observe that $c\le c^\prime$ and $d \le d^\prime$.  Thus, for any $u,v \in D_i$
such that $u<v$, $min(C_{ik},u)\le min(C_{ik},v)$ and $max(C_{ik},u) \le
max(C_{ik},v)$. Hence, $C_{ik}$ is a DS constraint.
\end{proof}

\begin{figure}[t]
\begin{center}
\includegraphics[width=\columnwidth]{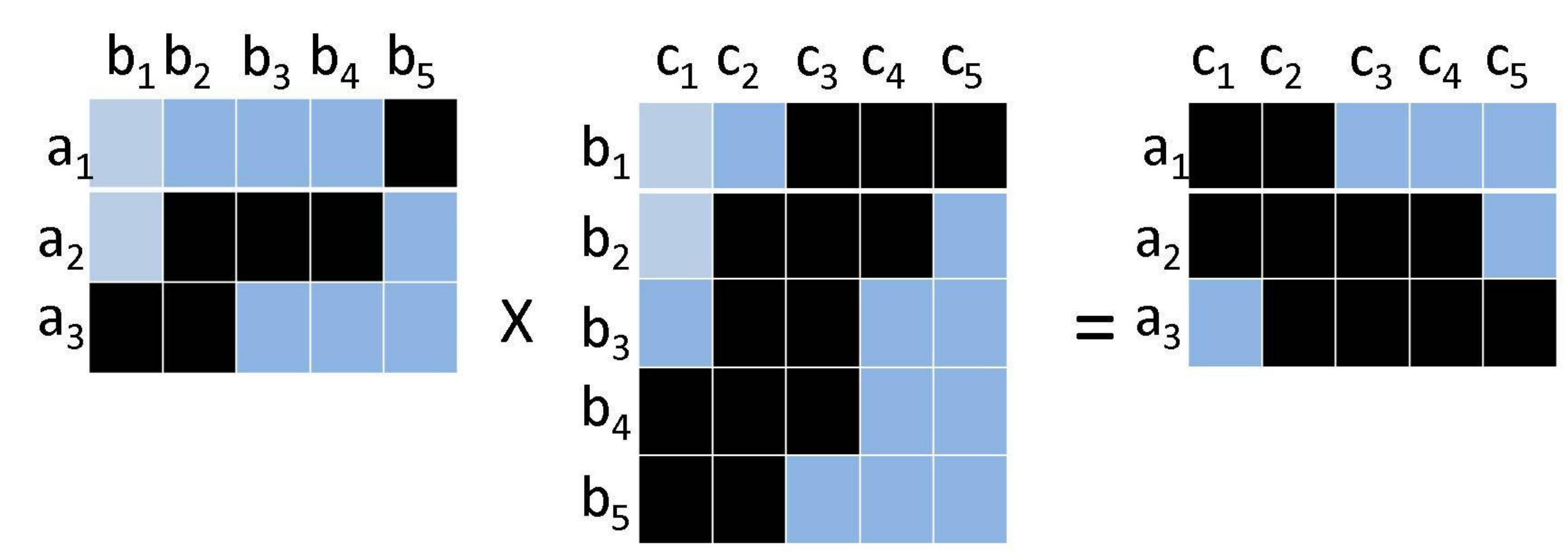}
\end{center}
\vspace*{-3mm}
\caption{Composition of two US constraints.}
\vspace*{-3mm}
\label{fig:compositionUS}
\end{figure}

Owing to monotonicity and row convexity, GS constraints admit much simpler and
efficient algorithms to compute the above operations, as opposed to those for
CRC constraints. From the arguments given in the above proofs, it is easy to see
that each of these operations can be performed in $O(d)$ space and time for both
DS and US constraints.  Since DS constraints are solvable using arc consistency
(as described in Section~\ref{sec:Arc-Consistency-for DSCSP}), we do not need
these closure properties. However, for US constraints, if we use either of the
existing approaches for CRC constraints (mentioned in Section~\ref{sec:work}),
we require these properties, and hence, lies the advantage of efficient
algorithms to compute these operations.

\section{Arc Consistency for GS Constraints}
\label{sec:Arc-Consistency-for DSCSP}

In this section, we first show that arc consistency is sufficient to determine a
solution to a DSCSP, following which we present an algorithm, \textsc{ACiDS}
(Arc Consistency for Down Staircase constraints), to compute arc consistency for
a DSCSP.  We also discuss how the same algorithm can be used for computing arc
consistency for a USCSP.

\subsection{Sufficiency of Arc Consistency for Solving DSCSP}

\begin{thm} \label{thm:ACiDS}
	Arc consistency is sufficient to determine a solution to a DSCSP.
\end{thm}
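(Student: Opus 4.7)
The plan is to reduce the claim to the already established fact (Theorem~\ref{thm:MinMaxClosed}) that every DS constraint is simultaneously max-closed and min-closed, and then exhibit an explicit solution using the ``extremal'' assignment strategy that is classical for max-closed constraint languages \cite{TractableConstraints95}. Concretely, I would first run any arc consistency procedure on the input DSCSP $\mathcal{P}$. If some domain becomes empty, then $\mathcal{P}$ is trivially unsatisfiable, so the interesting case is that every reduced domain $D_i$ remains non-empty and the network $\mathcal{N_P}$ is arc consistent.

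Next I would define the candidate assignment by setting $X_i := \max(D_i)$ for every variable $X_i$ (one could equivalently use $\min(D_i)$ via the min-closed side of Theorem~\ref{thm:MinMaxClosed}; picking $\max$ is just a convenient choice). The goal is to verify that this assignment satisfies every constraint $C_{ij} \in \mathcal{C}$. Fix such a constraint. By arc consistency applied to the arc $(i,j)$, there exists some $v_j \in D_j$ with $(\max(D_i),v_j)\in C_{ij}$, and by arc consistency applied to $(j,i)$, there exists some $v_i \in D_i$ with $(v_i,\max(D_j))\in C_{ij}$. Invoking the max-closed property on these two tuples produces
\[
\bigl(\max\{\max(D_i),v_i\},\,\max\{v_j,\max(D_j)\}\bigr)=\bigl(\max(D_i),\max(D_j)\bigr)\in C_{ij},
\]
since $v_i \in D_i$ and $v_j \in D_j$. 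Thus the assignment satisfies $C_{ij}$, and because the argument is local to each binary constraint, it satisfies all of $\mathcal{C}$ simultaneously.

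The only non-routine point is making sure that the witness values $v_i$ and $v_j$ provided by arc consistency indeed lie in the \emph{current} (reduced) domains $D_i$ and $D_j$, so that the equalities $\max\{\max(D_i),v_i\}=\max(D_i)$ and $\max\{v_j,\max(D_j)\}=\max(D_j)$ hold; this is immediate from the standard definition of arc consistency applied to the post-reduction network. I do not foresee a genuine obstacle beyond this bookkeeping: the heavy lifting has already been done in Theorem~\ref{thm:MinMaxClosed}, and the present theorem essentially packages the Jeavons--Cooper observation that, for a language closed under a semilattice operation such as $\max$, the extremal arc-consistent assignment is automatically a solution. A brief remark at the end would note that the dual assignment $X_i := \min(D_i)$ works identically via min-closure, which will also be convenient later when the ACiDS algorithm is analyzed.
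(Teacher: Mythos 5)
Your proof is correct and is essentially the paper's own argument: the paper also reduces to Theorem~\ref{thm:MinMaxClosed} and shows that, once the network is arc consistent with non-empty domains, the extremal assignment satisfies every constraint (there the vector of \emph{smallest} remaining values via min-closedness, argued by contradiction from the two supports, with the largest-values assignment noted as the dual). Your direct max-closed computation on the two support tuples is just the mirror-image, contradiction-free phrasing of the same two-support argument.
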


\begin{proof}
	Given a DSCSP $\mathcal{P}$, suppose $\forall i, D_i,D_i^\prime$
	respectively denote the domain of $X_i$ before and after achieving arc
	consistency, and $\forall i, f_i,l_i$ denote the first and last values of
	$D_i^\prime$ respectively $(f_i\leq l_i)$. We claim that the sets
	$S_f=\left\{f_1,f_2,\dots,f_n\right\}$ and
	$S_l=\left\{l_1,l_2,\dots,l_n\right\}$ are solutions to $\mathcal{P}$.

	Suppose $S_f$ is not a solution. Then there must exist a constraint $C_{ij}$
	such that $\left(f_{i},f_{j}\right)\notin C_{ij}$. since $f_i \in
	D_i^\prime$, there exists a smallest available support say $v_j\in
	D_j^\prime$ such that $(f_i,v_j)\in C_{ij}$. Similarly, since $f_j \in
	D_j^\prime$, there exists a smallest available support say $v_i\in
	D_i^\prime$ such that $(v_i,f_j)\in C_{ij}$. From the definition of
	$f_i,f_j$, it follows that since $(f_i,f_j)\notin C_{ij}$, hence $f_i<v_i$
	and $f_j<v_j$. Since $C_{ij}$ is min-closed,
	$(\min\{f_i,v_i\},\min\{f_j,v_j\})\in C_{ij}$, i.e., $(f_i,f_j)\in C_{ij}$,
	which is a contradiction.

	Therefore, $S_f$ is a solution to $\mathcal{P}$.	
	Similarly, we can show that $S_l$ is also a solution to $\mathcal{P}$.
\end{proof}

\subsection{ACiDS Algorithm \label{sec:ACiDS}}

\subsubsection{Algorithm Overview}

We now present the algorithm ACiDS (pseudocode in Algorithm~\ref{alg:ACiDS} with
the \emph{ArcCons()} and \emph{LocalArcCons()} routines in
Algorithm~\ref{alg:ACiDS-arc}) that computes arc consistency for a DSCSP.  This
algorithm is a specialization of the generic AC5 algorithm \cite{GenericAC5}. At
a high level, the algorithm first detects the values that are without any
support: $\{v_j\in D_j|\nexists v_i \in D_i:(v_j,v_i)\in C_{ji}\}$. Then for all
$k$ such that $C_{kj}\in \mathcal{C}$, the tuple $(k,j,v_j)$ is added into a
queue $Q$ which essentially holds values due for constraint propagation while
pruning.  After dequeuing the tuple $(k,j,v_j)$ at a later stage, it checks
whether $v_j$ was the last available support for any $v_k \in D_k$, and puts all
such values into $Q$. The process terminates when $Q$ becomes empty.

\subsubsection{The MIN Data Structure}

For efficiently computing the set of values in $D_i$ for which $v_j$ is the only
support, we introduce a new data structure $MIN$. We define $MIN(v_j,i)$ as the
set of values in $D_i$ for which $v_j$ is the smallest available support in
$D_j$ over the constraint $C_{ij}$. Initially, we set
$MIN(v_j,i)=\{v_i|v_j=min(C_{ij},v_i)\}$. If $\nexists v_i \in D_i$ such that
$v_j=min(C_{ij},v_i)$, then $MIN(v_j,i)=\phi$. In Figure~\ref{fig:staircase}(a),
$MIN(b_1,i)=[a_1,a_2]$, $MIN(b_3,i)=[a_3,a_4]$, $MIN(b_5,i)=[a_5]$, and
$MIN(b_2,i)=MIN(b_4,i)=\phi$.  Due to the DS property of $C_{ji}$, the set
$MIN(v_j,i)$ can be represented as an interval. To illustrate this, suppose
$MIN(v_j,i)=\{v_1,v_3,v_6\}$ where $v_i \in D_i$ for $1\le i \le 6$. Since
$C_{ij}$ is a DS constraint and $min(v_1)=min(v_3)=min(v_6)$, it implies that
$v_2,v_4$ and $v_5$ does not have any support in $D_j$ over the constraint
$C_{ij}$. Hence, we can also denote $MIN$ in an interval form as
$MIN(v_j,i)=[v_1,v_6]$ which essentially marks the boundaries of $MIN(v_j,i)$.

We observe that since $C_{ij}$ is a DS constraint, for any $v_j,v_j^\prime \in
D_j$ such that $v_j<v_j^\prime$, if $MIN(v_j,i)=[a,b]$ and
$MIN(v_j^\prime,i)=[a^\prime,b^\prime]$, then $b<a^\prime$.  The algorithm
exploits this disjoint interval property of $MIN$ sets. During the course of the
algorithm, the $MIN$ sets and the $pred,succ$ values get updated, while $min$
and $max$ values remain unchanged.

\begin{algorithm}[!t]
\caption{ACiDS \label{alg:ACiDS}} 

\begin{algorithmic}[1]
{ \footnotesize \Procedure{main}{$\mathcal{P=\left(X,D,C\right)}$}

\State \emph{Initialize}$\left(\right)$

\ForAll{$C_{ij}\in\mathcal{C}$}

\State \emph{ArcCons}$\left(i,j,\triangle\right)$

\State \emph{Enqueue}$\left(i,\triangle,Q\right)$

\State \emph{Remove}$\left(\triangle,D_{i}\right)$

\EndFor 

\While{$Q\neq\phi$ \label{while:start}}

\State \emph{Dequeue}$\left(i,j,v_{j},Q\right)$

\State \emph{LocalArcCons}$\left(i,j,v_{j},\triangle\right)$

\State \emph{Enqueue}$\left(i,\triangle,Q\right)$

\label{while:end}
\State \emph{Remove}$\left(\triangle,D_{i}\right)$

\EndWhile

\EndProcedure

}
\end{algorithmic}

\begin{algorithmic}[1]
{ \footnotesize 
\Procedure{Initialize}{}

\State $Q\gets \left\{\right\}$
\State $\forall v_{j}\in D_{j},\ \forall C_{ij}\in\mathcal{C},MIN\left(v_{j},i\right)$$\gets \phi$

\ForAll{$C_{ij}\in\mathcal{C}$}

\State$scan_{i},scan_{j}$ point to the first value of $D_{i},D_{j}$ respectively

\While{$scan_{i}\neq\phi\wedge scan_{j}\neq\phi$}

\While{$min(scan_{i})>scan_{j}$}

\State $scan_{j}\gets min(scan_{i})$

\EndWhile

\State $MIN\left(scan_{j},i\right)$$\gets$ $MIN\left(scan_{j},i\right)\cup\left\{ scan_{i}\right\} $ 

\State $scan_{i} \gets succ\left(scan_i\right)$

\EndWhile

\EndFor

\EndProcedure
}
\end{algorithmic}

\begin{algorithmic}[1]
{
\footnotesize
\Procedure{Enqueue}{$i,\triangle, Q$}
\ForAll {$v_{i}\in\triangle$}
\State $Q \gets Q\cup\left\{ \left(j,i,v_{i}\right)|\forall\left(j,i\right)\in arc\left(\mathcal{P}\right)\right\} $
\EndFor
\EndProcedure
}
\end{algorithmic}

\begin{algorithmic}[1]
{
\footnotesize
\Procedure{Dequeue}{$i,j,v_j,Q$}

\If {$Q \neq \phi$} 

\State 
$Q \gets Q\setminus\left(i,j,v_{j}\right)$
\EndIf
\EndProcedure
}
\end{algorithmic}

\begin{algorithmic}[1]
{
\footnotesize
\Procedure{Remove}{$v,D_i$}
\ForAll {$C_{ij}\in \mathcal{C}$}

\If {$v \in D_i(C_{ij})$}

\State $pred\left(succ\left(v,D_i(C_{ij}) \right),D_i(C_{ij})\right)$

\State $\qquad$ $\gets$$pred\left(v,D_i(C_{ij})\right)$

\State $succ\left(pred\left(v,D_i(C_{ij})\right),D_i(C_{ij})\right)$

\State $\qquad$ $\gets$$succ\left(v,D_i(C_{ij})\right)$

\State$D_{i}(C_{ij})$$\gets$$D_{i}(C_{ij})\setminus\left\{ v\right\} $
\EndIf
\EndFor
\State $D_i \gets D_i \setminus\left\{ v\right\}$

\EndProcedure
}
\end{algorithmic}
\end{algorithm}

\subsubsection{Re-visiting ArcCons() and LocalArcCons()}

We re-design the generic subroutines \emph{ArcCons()} and \emph{LocalArcCons()},
stated in AC5, specifically for DSCSP in Algorithm~\ref{alg:ACiDS-arc}.
 
If $v_i \notin D_i(C_{ij})$, i.e., $v_i$ has no support in $D_j$ over the
constraint $C_{ij}$, then we define $min\left(C_{ij},v_i\right) =
max\left(C_{ij},v_i\right) = \phi$. We use pointer $scan_i$ to scan the values
of $D_i$. When $scan_i$ points to $v_i$, we say $scan_i=v_i$. Further,
$scan_i=\phi$ if and only if $scan_i$ reaches the end of domain $D_i$.
Similarly, we assume that $succ(v_i)=\phi$ if and only if $v_i$ is the largest
value in $D_i$.

The algorithm begins with initialization of data structures: queue $Q$ to an
empty queue and $MIN$ as per the definition stated above. Following this,
\emph{ArcCons}$(i,j,\triangle)$ is called for each constraint $C_{ij}\in
\mathcal{C}$.  $ArcCons(i,j,\triangle)$ identifies those values $v_i \in D_i$
such that $v_i \notin D_i(C_{ij})$ and adds them to the set $\triangle$.
Further, $Enqueue(i,\triangle,Q)$ enqueues tuples of the form $(j,i,v_i)$ into
$Q$ for all $arc(j,i) \in arc(\mathcal{P})$ and all $v_i \in \triangle$.  Then,
the call to $Remove(\triangle,D_i)$ removes all $v_i \in \triangle$ from $D_i$
and $D_i(C_{ij})$ for each constraint $C_{ij}$. In addition, it resets the
$pred$ and $succ$ values for the preceding and succeeding values of $v_i$ in
$D_i$.
 
The algorithm continues until $Q$ becomes empty. Within the while loop (lines
\ref{while:start}-\ref{while:end}), a tuple $(i,j,v_j)$ is dequeued from $Q$
using $Dequeue(i,j,v_j,Q)$ and then the subroutine
$LocalArcCons(i,j,v_j,\triangle)$ is invoked. 

The heart of the ACiDS algorithm lies in the subroutine
\emph{LocalArcCons}$(i,j,v_j,\triangle)$. Firstly, it determines the set
$\triangle=\{v_i \in MIN(v_j,i)|succ(v_j)=\phi \vee max(v_i)<succ(v_j)\}$ as the
set of values $v_i \in D_i$ for which $v_j \in D_j$ was the only available
support. Since the constraint is a DS, these values of $\triangle$ can be
identified by sequentially accessing the values in the $MIN$ set using the
$succ$ values, without visiting any $v\notin \triangle$.  Hence, this step
requires at most $O(\triangle)$ operations.  Secondly, if
$v_j^\prime=succ(v_j)\neq \phi$, it adds the set
$\left\{MIN(v_j,i)-\triangle\right\}$ to the set $MIN(v_j^\prime,i)$. This is to
ensure that those values  $v_i\in D_i$ for which $v_j$ was the smallest
available support in $D_j$ but not the only remaining support, will henceforth
have $v_j^\prime$ as the smallest available support. Since $MIN$ sets are stored
as disjoint intervals, this step can be completed in $O(1)$ time.  It is
interesting and important to note here that if $v_j$ is not the smallest support
for any $v_i$, i.e., $MIN(v_j,i)=\phi$, then no data structure needs to be
updated and $LocalArcCons()$ returns immediately. 

\begin{algorithm}[!t]
\caption{ArcCons and LocalArcCons \label{alg:ACiDS-arc}} 

\begin{algorithmic}[1]
{
\footnotesize 
\Procedure{ArcCons}{$i,j,\triangle$}
\State $\triangle \gets \left\{ v_{i}\in D_i|\nexists v_j \in D_j:(v_i,v_j) \in C_{ij}\right\}$
\EndProcedure

}
\end{algorithmic}

\begin{algorithmic}[1]
{
\footnotesize

\Procedure{LocalArcCons}{$i,j,v_j,\triangle$}

\If{$MIN\left(v_{j},i\right)=\phi$}

\State \textbf{return}

\EndIf

\State$\triangle$$\gets \phi$

\State $MIN\left(v_{j},i\right) = \left[v_{i_{1}},v_{i_{2}}\right]$ for some $v_{i_{1}},v_{i_{2}}\in D_{i}$ \label{l1}

\State$v_{j}^{\prime} \gets succ(v_j)$

\State $scan_{i} \gets v_{i_{1}}$

\While{$\left(v_{j}^{\prime}=\phi \vee  max\left(C_{ij},scan_{i}\right) < v_{j}^{\prime} \right)\wedge \left(scan_{i}\leq v_{i_{2}}\right)$}

\State$\triangle$$\gets$$\triangle\cup\left\{ scan_{i}\right\} $

\State$scan_{i} \gets succ(scan_{i})$

\EndWhile

\If{$scan_i \neq \phi \wedge scan_{i}\leq v_{i_{2}}$}

\State $MIN\left(v_{j}^{\prime},i\right)$$\gets$$MIN\left(v_{j}^{\prime},i\right)\cup\left[scan_{i},v_{i_{2}}\right] $ \label{x}

\EndIf \label{l2}

\EndProcedure
}
\end{algorithmic}

\end{algorithm}

\subsubsection{Correctness of ACiDS Algorithm}
 
In order to establish the correctness of the ACiDS algorithm which is a
specialization of AC5, we essentially need to show the correctness of
$LocalArcCons()$. In order to do so, we first show that $MIN(v_j,i)$ always
retains the set of values in $D_i$ for which $v_j$ is the smallest available
support in $D_j$ over the constraint $C_{ij}$.

\begin{lem}
	\label{lem:CorrectnessLocalArcCons}  
	At the end of any iteration of LocalArcCons$(i,j,v_j,\triangle)$ over a
	given ordered pair $(i,j)$, for any $v_{i}\in D_{i}\setminus \triangle$ and
	$v_{j}\in D_{j}$, $v_{i}\in MIN(v_{j},i)$ if and only if $v_{j}$ is the
	smallest available support for $v_{i}$ over the constraint $C_{ij}$.
\end{lem}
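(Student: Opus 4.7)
The plan is to prove the invariant by induction on the number of calls to \emph{LocalArcCons} executed so far, with the setup done by \emph{Initialize} serving as the base case. After \emph{Initialize}, each $v_i \in D_i$ has been placed into $MIN(min(C_{ij},v_i),i)$, so $v_i \in MIN(v_j,i)$ iff $v_j = min(C_{ij},v_i)$, i.e., iff $v_j$ is the smallest support in the (still complete) domain $D_j$. The subsequent \emph{ArcCons} and \emph{Remove} calls at the top of the main procedure only delete values of $D_i$ that have no support at all, and therefore do not violate the invariant for any surviving $v_i \in D_i \setminus \triangle$.

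For the inductive step, consider a call \emph{LocalArcCons}$(i,j,v_j,\triangle)$, which is preceded by $v_j$ having been removed from $D_j$ and enqueued. By the inductive hypothesis, the $v_i$ whose smallest support can possibly change as a result of losing $v_j$ are exactly those in $MIN(v_j,i)$: any $v_i \notin MIN(v_j,i)$ previously had some smaller support $v_j'' \neq v_j$, which is still present. If $MIN(v_j,i)=\phi$ the routine returns and nothing need change. Otherwise, writing $MIN(v_j,i)=[v_{i_1},v_{i_2}]$ and $v_j' = succ(v_j)$, I would use the DS property of $C_{ij}$ to observe that $v_i \mapsto max(C_{ij},v_i)$ is non-decreasing over $[v_{i_1},v_{i_2}]$. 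Hence the set of $v_i$ with $max(C_{ij},v_i) < v_j'$ (or with $v_j'=\phi$) is a prefix $[v_{i_1},\,s{-}1]$, and this prefix is precisely the set $\triangle$ of values that lose all support, since by DS no $v_j'' > v_j'$ can support such a $v_i$. The complementary suffix $[s,v_{i_2}]$ consists of exactly those $v_i$ whose new smallest support is $v_j'$, and the routine appends this suffix to $MIN(v_j',i)$.

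The delicate part, which I would treat as the main obstacle, is confirming that $MIN(v_j',i)$ remains representable as a single interval after this append, as the $O(1)$ update in the routine relies on that representation. For this, I would carry along a stronger inductive invariant: for every pair of currently-available values $v_j < v_j'$ in $D_j$ that are consecutive in $D_j(C_{ji})$, the sets $MIN(v_j,i)$ and $MIN(v_j',i)$ are adjacent intervals in $D_i(C_{ij})$. This holds initially by construction in \emph{Initialize}, and during the current call the old $MIN(v_j',i)$ is either empty or an interval whose first element is the successor of $v_{i_2}$ in $D_i(C_{ij})$, so appending $[s,v_{i_2}]$ yields a contiguous interval. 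All other $MIN(v_j'',i)$ sets are untouched by this invocation, so the invariant is re-established and the induction goes through.
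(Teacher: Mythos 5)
Your proposal is correct and follows essentially the same route as the paper: induction over the calls to \emph{LocalArcCons} for the pair $(i,j)$, with the base case supplied by \emph{Initialize} and the inductive step split according to whether $max(C_{ij},v_i)$ falls below $succ(v_j)$ (value loses its last support and enters $\triangle$) or not (its smallest support shifts to $succ(v_j)$ via the update to $MIN(succ(v_j),i)$). The only difference is your additional adjacency invariant guaranteeing that $MIN(v_j^\prime,i)$ stays a single interval; the paper does not prove this inside the lemma but asserts the corresponding disjoint-interval property of the $MIN$ sets informally in the description of the data structure, so your treatment adds rigor on a point the lemma's statement itself (which concerns only membership) does not strictly require.
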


\begin{proof}
	We prove this lemma by applying induction on iteration $r$.
	
	Consider any $v_{i}\in D_{i}$. Suppose $min(C_{ij},v_{i})=v_{j}$.  Note that
	during initialization, $v_{i}\in MIN(v_{j},i)$.  Now, consider the call to
	\emph{LocalArcCons}$(i,j,v_{j_{r}},\triangle)$ for the first time ($r=1$)
	over the pair $(i,j)$. If $v_{j_{1}}\neq v_{j}$, $v_{i}$ continues to be in
	$MIN\left(v_{j},i\right)$ and $v_{j}$ continues to be the smallest available
	support for $v_{i}$. If, however, $v_{j_{1}}=v_{j}$, then the following two
	cases arise: \\
	(1) If $succ(v_{j})=\phi$ or $max(C_{ij},v_{i}) < succ(v_{j})$ such that
	$succ(v_{j}) \neq \phi$, then using the DS property, it implies that $v_{i}$
	lost its only support $v_{j}$ and is, thus, inserted to $\triangle$.  In
	this case, the hypothesis is no longer applicable as $v_i \notin D_i
	\setminus \triangle$. \\
	(2) If $max(C_{ij},v_{i}) \ge succ(v_{j})$ such that $succ(v_{j}) \neq
	\phi$, then $v_{i}$ is added to $MIN(v_{j^{\prime}},i)$ (line \ref{x}) where
	$v_{j^\prime} =$ $succ(v_{j})$.  By definition of $succ()$, $v_{j^{\prime}}$
	is the next greater value after $v_{j}$ in $D_{j}(C_{ji})$.  Since
	$v_{j}=min(C_{ij},v_{i})$ is already pruned from $D_{j}$, $v_{j}^{\prime}$
	is the smallest available support for $v_{i}$ over the constraint $C_{ij}$. 

	Hence, the hypothesis holds for $r=1$.

	Suppose the hypothesis holds for $r=p-1$. Now consider the call to
	\emph{LocalArcCons}$(i,j,v_{j_{r}},\triangle)$ for the $r=p^{th}$ iteration
	over the pair $(i,j)$. If $v_{j_{p}}\neq v_{j}$, then following the
	induction hypothesis, $v_{i}$ continues to be in $MIN\left(v_{j},i\right)$
	and $v_{j}$ continues to be the smallest available support for $v_{i}$. If,
	however, $v_{j_{p}} = v_{j}$, then the following two cases arise: \\
	(1) If $succ(v_{j})=\phi$ or $max(C_{ij},v_{i}) < succ(v_{j})$ such that
	$succ(v_{j}) \neq \phi$, then using the DS property, it implies that $v_{i}$
	lost its only support $v_{j}$ and is, thus, inserted to $\triangle$.  In
	this case, the hypothesis is no longer applicable as $v_i \notin D_i
	\setminus \triangle$. \\
	(2) If $max(C_{ij},v_{i}) \ge succ(v_{j})$ such that $succ(v_{j}) \neq
	\phi$, then $v_{i}$ is added to $MIN(v_{j^{\prime}},i)$ (line \ref{x}) where
	$v_{j^\prime} = succ(v_{j})$.  By definition of $succ()$, $v_{j^{\prime}}$
	is the next greater value after $v_{j}$ in $D_{j}(C_{ji})$.  Following the
	induction hypothesis, for $r=p-1$, since $v_{j}$ was the smallest available
	support for $v_{i}$ over the constraint $C_{ij}$ and now $v_{j}$ is already
	pruned from $D_{j}$, $v_{j^{\prime}}$ is the smallest available support for
	$v_{i}$ at the end of the $r=p^{th}$ induction step.

	Thus, the induction hypothesis follows.
\end{proof} 

The next lemma proves the correctness of the subroutine \emph{LocalArcCons()}.

\begin{lem}
	\label{lem:CorrectnessACiDS}
	LocalArcCons$(i,j,v_j,\triangle)$ correctly computes the set: $\triangle =
	\{ v_{i}\in D_{i}|\ (v_{i},v_{j}) \in C_{ij},\forall v_{j}^{\prime}\in
	D_{j},\left(v_{i},v_{j}^{\prime}\right) \notin C_{ij}\}$.
\end{lem}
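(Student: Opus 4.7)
The plan is to leverage Lemma \ref{lem:CorrectnessLocalArcCons} as the invariant that characterizes the contents of the $MIN$ sets at every iteration, and then argue that \emph{LocalArcCons} correctly extracts, from $MIN(v_j,i)$, exactly those values whose last remaining support has just been pruned.

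First, I would establish a containment: $\triangle \subseteq MIN(v_j,i)$. Indeed, if $v_i$ is a value whose only support in $D_j$ is $v_j$, then trivially $v_j = min(C_{ij},v_i)$ when restricted to the current $D_j(C_{ji})$, and by Lemma \ref{lem:CorrectnessLocalArcCons} this is exactly the condition that places $v_i$ in $MIN(v_j,i)$. Conversely, any $v_i \notin MIN(v_j,i)$ has some strictly smaller support still available in $D_j$, so it cannot be among the values to be enqueued. Therefore the search can be restricted to the interval $[v_{i_1},v_{i_2}] = MIN(v_j,i)$, which matches line \ref{l1} of the algorithm. The early return when $MIN(v_j,i)=\phi$ is justified at this point.

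Next, within $MIN(v_j,i)$, I would characterize exactly which values have $v_j$ as their \emph{sole} remaining support. For $v_i \in MIN(v_j,i)$, the image of $v_i$ in $C_{ij}$ restricted to the current $D_j(C_{ji})$ is the interval $[v_j,\,max(C_{ij},v_i)]$. Hence $v_j$ is the only support for $v_i$ precisely when either $succ(v_j)=\phi$, or $max(C_{ij},v_i) < succ(v_j)$. This is exactly the loop condition. The key structural fact that makes the scanning linear (and not just correct on the level of membership) is the DS property of $C_{ij}$: along the interval $[v_{i_1},v_{i_2}]$ the function $v_i \mapsto max(C_{ij},v_i)$ is monotonically non-decreasing. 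Consequently, the subset of $MIN(v_j,i)$ for which $max(C_{ij},v_i) < succ(v_j)$ is a (possibly empty) prefix of $MIN(v_j,i)$ in the $D_i$ order. The while loop scans this prefix left to right using $succ$, adds each such $scan_i$ to $\triangle$, and stops as soon as $max(C_{ij},scan_i) \ge succ(v_j)$ or $scan_i$ exceeds $v_{i_2}$. Thus the computed $\triangle$ coincides with the desired set.

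Finally, to close the loop, I would verify that the branch which updates $MIN(v_j^\prime,i)$ with $[scan_i, v_{i_2}]$ does not interfere with the characterization of $\triangle$: that update concerns precisely the complementary suffix of $MIN(v_j,i)$ (values whose support survives beyond $v_j$), which is disjoint from $\triangle$ by construction, and its correctness has already been absorbed into Lemma \ref{lem:CorrectnessLocalArcCons}. The main subtlety, and the only place where a careful argument is required, is the monotonicity step that turns the membership condition into a contiguous prefix; once that is in hand, the rest is bookkeeping against the invariant of the previous lemma.
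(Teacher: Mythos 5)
Your proposal is correct and follows essentially the same route as the paper's proof: it uses Lemma~\ref{lem:CorrectnessLocalArcCons} to justify the early return and to restrict attention to $MIN(v_j,i)$, and then the DS property (monotonicity of $max(C_{ij},\cdot)$) to show that the values losing their last support form exactly the prefix of $MIN(v_j,i)$ scanned by the while loop, which is the paper's split of $MIN(v_j,i)$ at the point where $max$ first reaches $succ(v_j)$, together with the $succ(v_j)=\phi$ case. Your write-up merely makes the prefix/monotonicity step more explicit than the paper does; there is no substantive difference.
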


\begin{proof}
	From Lemma \ref{lem:CorrectnessLocalArcCons}, $MIN(v_j,i)=\phi$, if and only
	if $v_j$ is not the smallest available support for any $v_i \in D_i$. Hence,
	pruning $v_j$ would not make any $v_i$ inconsistent. Therefore, the algorithm
	returns immediately.
	
	Now consider the case $MIN(v_j,i) \neq \phi$. Note that any $v_i$ that is
	supported by $v_j$ and has $v_j$ as the only available support in $D_j$ must
	lie in $MIN(v_j,i)$. From the pseudocode (lines \ref{l1}-\ref{l2}), we note
	the following two cases:\\ (1) If $succ(v_j)=\phi$, then using Lemma
	\ref{lem:CorrectnessLocalArcCons}, there exists no $v_j^\prime$ larger than
	$v_j$ in $D_j$ that can potentially support any $v_i$. Hence each $v_i \in
	MIN(v_j,i)$ is added to $\triangle$.  \\
	(2) If $succ(v_j)\neq \phi$, then suppose $MIN(v_j,i) = [ v_{i_1},\dots,$
	$v_{i_2},v_{i_3},\dots,v_{i_4}]$, such that $max(v_{i_2})<succ(v_j)$ and
	$max(v_{i_3}) \ge succ(v_j)$.  Consider the set of values $v_i \in
	[v_{i_1},v_{i_2}]$. From Lemma \ref{lem:CorrectnessLocalArcCons}, $v_j$ is
	the smallest available support for each of these $v_i$.  Further, since
	$C_{ij}$ is a DS, $max(v_i)<succ(v_j)$. Hence  after pruning $v_j$, no $v_i$
	has any support in $D_j$ and is thus added to $\triangle$.   Also observe
	that since $max(v_{i_3}) \ge succ(v_j)$, any $v_i^\prime \in
	[v_{i_3},v_{i_4}]$ has $succ(v_j)$ as a valid support after pruning $v_j$.
	Hence, no $v_i^\prime$ is added to $\triangle$. 
\end{proof}
 
\subsubsection{Complexity of ACiDS Algorithm}

We first analyze the time complexity of ACiDS. The subroutine
\emph{Initialize()} runs in $O(cd)$ time. For each constraint $C_{ij} \in
\mathcal{C}$, this procedure simultaneously scans the domains $D_i$ and $D_j$
without revisiting any value.  Next, we note that $ArcCons(i,j,\triangle)$ runs
in $O(d)$ time. From the description of \emph{LocalArcCons}$()$ in Section
\ref{sec:ACiDS}, we argue that it requires $O(\triangle)$ time. 

It was stated in \cite{GenericAC5} that if the time complexity of
\emph{ArcCons()} is $O(d)$ and that of \emph{LocalArcCons()} is $O(\triangle)$,
then the algorithm AC5 runs in $O(cd)$ time.  Since the $ArcCons()$ and
$LocalArcCons()$ in ACiDS take $O(d)$ and $O(\triangle)$ time, we conclude that
ACiDS also runs in $O(cd)$ time.

Analyzing the space complexity, we note that since each DS constraint can be
stored as a list of intervals, the total input space requirement for a DSCSP is
$O(cd)$. During the execution, since any $v_i \in D_i$ is enqueued at most once
for each $arc(j,i) \in arc \left(\mathcal{P}\right)$, the queue length of $Q$ is
at most $O(cd)$. By a similar argument, the $MIN$ data structure takes at most
$O(cd)$ space.  Since at any point, $\triangle$ holds values of any one domain,
$\triangle$ takes no more than $O(d)$ space. Hence, the ACiDS algorithm uses at
most $O(cd)$ space.

The $O(cd)$ space and time complexity is \emph{optimal} for any CSP with $c$
constraints, since each constraint needs $\Omega(d)$ space, and at least one
scan of any domain requires $\Omega(d)$ time.  The ACiDS algorithm improves the
complexity bound of $O(cd^2)$ in \cite{AAAI:2007RowConvex} for the subclass DS
of CRC constraints. 

\subsection{Arc Consistency for USCSP}

\begin{algorithm}[t]
\caption{Subroutines for Arc Consistency for USCSP\label{alg:ACUS}}

\begin{algorithmic}[1]
{ \footnotesize 
\Procedure{Initialize}{}

\State $Q\gets \left\{\right\}$
\State $\forall v_{j}\in D_{j},\ \forall C_{ij}\in\mathcal{C},MIN\left(v_{j},i\right)$$\gets \phi$

\ForAll{$C_{ij}\in\mathcal{C}$}

\State$scan_{i},scan_{j}$ point to last and first values of $D_{i},D_{j}$
respectively

\While{$scan_{i}\neq\phi\wedge scan_{j}\neq\phi$}

\While{$min(scan_{i})>scan_{j}$}

\State $scan_{j}\gets min(scan_{i})$

\EndWhile

\State $MIN\left(scan_{j},i\right)$$\gets$ $MIN\left(scan_{j},i\right)\cup\left\{ scan_{i}\right\} $ 

\State $scan_{i} \gets pred\left(scan_i\right)$

\EndWhile

\EndFor

\EndProcedure
}
\end{algorithmic}

\begin{algorithmic}[1]
{
\footnotesize

\Procedure{LocalArcCons}{$i,j,v_j,\triangle$}

\If{$MIN\left(v_{j},i\right)=\phi$}

\State \textbf{return}

\EndIf

\State$\triangle$$\gets \phi$

\State $MIN\left(v_{j},i\right) = \left[v_{i_{1}},v_{i_{2}}\right]$ for some $v_{i_{1}},v_{i_{2}}\in D_{i}$ \label{l1}

\State$v_{j}^{\prime} \gets succ(v_j)$

\State $scan_{i} \gets v_{i_{2}}$

\While{$\left(v_{j}^{\prime}=\phi \vee  max\left(C_{ij},scan_{i}\right) < v_{j}^{\prime} \right)\wedge \left(scan_{i}\geq v_{i_{1}}\right)$}

\State$\triangle$$\gets$$\triangle\cup\left\{ scan_{i}\right\} $

\State$scan_{i} \gets pred(scan_{i})$

\EndWhile

\If{$scan_i \neq \phi \wedge scan_{i}\geq v_{i_{1}}$}

\State $MIN\left(v_{j}^{\prime},i\right)$$\gets$$MIN\left(v_{j}^{\prime},i\right)\cup\left[v_{i_{1}},scan_{i}\right] $ \label{x}

\EndIf \label{l2}

\EndProcedure
}
\end{algorithmic}
\end{algorithm}

In this section, we show that with minor modifications in the way we scan the
domains in the subroutines, \emph{Initialize()} and \emph{LocalArcCons()}, the
ACiDS algorithm can compute arc consistency for a USCSP.  The modified
subroutines are presented in Algorithm~\ref{alg:ACUS}. In addition to the
assumptions stated in the ACiDS algorithm, we assume $scan_i=\phi$ if and only
if $scan_i$ reaches the end of the domain on either side, and $pred(v_i)=\phi$
if and only if $v_i$ is the smallest value of the domain.  It is easy to follow
that arc consistency for a USCSP can also be achieved with the same time and
space complexity of $O(cd)$ as in the case of DSCSP.  However, since arc
consistency is not known to be sufficient to determine a solution to a USCSP,
techniques such as path consistency or variable elimination are required. Since
arc consistency is a basic subroutine in these techniques, replacing that by our
modified ACiDS algorithm produces a faster solution. It improves the running
time of $O\left(\sigma^{2}nd+cd^{2}\right)$ (where $\sigma \le n$ is a problem
specific parameter) of the best known algorithm \cite{AAAI:2007RowConvex} to
$O\left(\sigma^{2}nd+cd\right)$, which is linear in $d$.

\section{A Faster Solver for DSCSP}
\label{sec:A faster algorithm}

\begin{algorithm}[t]
\caption{ DSCSP Solver\label{alg:DSCSP}}
\begin{algorithmic}[1]
{
\footnotesize
\Procedure{main}{$\mathcal{P=\left(X,D,C\right)}$}
\State \emph{Initialize()}

\While{$\forall i \ scan_i\neq\phi$ }

\State $j \gets$\emph{Dequeue} $\left(Q\right)\label{dequeue}$
\Comment{$flag_j= 0$ now}
\ForAll{$\{k:C_{jk}\in\mathcal{C}\}\label{restart}$}

\While {$scan_j \neq \phi \wedge scan_j \notin D_j(C_{jk})$} \Comment{$flag_j=0$ now}
\State Increment $scan_j$ to the next value in $D_j$
\EndWhile

\While{$ scan_k \neq \phi \wedge scan_k \notin D_k(C_{kj})$} \Comment{$flag_k=0$ now}
\State Increment $scan_k$ to the next value in $D_k$

\EndWhile

\If {$scan_j < min\left(C_{kj},scan_k\right)$}

\State $scan_j\gets min\left(C_{kj},scan_k\right)$

\State \textbf{continue} \Comment{Restart for loop}

\ElsIf {$scan_k < min\left(C_{jk},scan_j\right)$}

\State $scan_k \gets min\left(C_{jk},scan_j\right)$

\If {$flag_k=1$}

\State $flag_k \gets 0 \label{flag_decrement}$ 

\State $sum \gets sum -1$

\State $Q \gets Q \cup \left\{ k\right\} \label{enqueue}$

\EndIf

\EndIf

\EndFor \label{endfor}

\State $flag_j\gets 1 \label{flag_increment}$
\State $sum \gets sum +1$
\If {$sum=n$}
\State \textbf{print} Solution $=\{\forall i,\ scan_i\}$
\State \textbf{break} 
\EndIf

\EndWhile
\EndProcedure
}
\end{algorithmic} 

\begin{algorithmic}[1]
{
\footnotesize
\Procedure{Initialize}{}
\State $sum \gets 0$
\State $Q \gets \varnothing$
\ForAll {$X_i\in \mathcal{X}$}
\State $scan_i$ points to the first value of $D_i$
\State $flag_i \gets 0$
\State $Q \gets Q \cup \left\{ i\right\}$
\EndFor

\EndProcedure
}
\end{algorithmic}

\end{algorithm}

This section presents a more efficient algorithm \textsc{DSCSP Solver}
(Algorithm~\ref{alg:DSCSP}) to solve a DSCSP than ACiDS. We observe that a
solution to a DSCSP can be found even without achieving arc consistency.
Exploiting the DS property of the DS constraints and the total ordering of the
variable domains, the algorithm incrementally scans the variable domains in a
manner such that no value is revisited. At each point, we check whether the
values being scanned form a solution. Only when it is confirmed that the current
value cannot form a solution is the next value scanned.

The algorithm maintains a queue $Q$ of variable indices. A variable index $i \in
Q$ if and only if it is needed to check whether $(scan_i,scan_j)\in C_{ij}$ for
all the constraints $C_{ij}$, imposed on $X_i$.  The algorithm also uses
pointers $scan_i$ and boolean variables $flag_i$ for each variable $X_i \in
\mathcal{X}$. The variable $sum$ maintains the sum of all the $flag$ variables.
As stated earlier, we assume that for each pair of variables $(X_i,X_j)$, there
is at most one constraint $C_{ij}$, and we need not consider its transpose
$C_{ji}$ separately. (In the ACiDS algorithm, we considered both $C_{ij}, C_{ji}
\in \mathcal{C}$.)

DSCSP Solver first calls \emph{Initialize()} for initializing the $scan_i$
pointer to point to the first value of $D_{i}$ and $flag_i$ to $0$ for each
$X_{i}\in\mathcal{X}$. The queue $Q$ is initialized to contain all the variable
indices. 

Then, the algorithm proceeds in iterations.  In each iteration, we dequeue a
variable index $j$ and consider all the constraints $C_{jk}\in \mathcal{C}$. We
keep incrementing $scan_j$ (and $scan_k$) until it reaches a value which has a
support in $D_k$ ($D_j$ respectively). If $scan_j<min(scan_k)$, we set $scan_j =
min(scan_k)$ and reconsider all the constraints imposed on $X_j$.  Also, if
$scan_k<min(scan_j)$, we set $scan_k=min(scan_j)$.  Further, if $flag_k=1$, we
reset it to $0$ and add $k$ to $Q$. Since $sum$ maintains
$\sum_{i=1}^{n}flag_{i}$, $sum$ is decremented by $1$ as a consequence. At the
end of the for loop (lines \ref{restart}-\ref{endfor}), if $scan_j,scan_k \neq
\phi$, then for $scan_j$, every constraint $C_{jk}$ imposed on $X_j$ is
satisfied by $scan_k$.  As a result, $flag_j$ moves from $0$ to $1$ and,
therefore, $sum$ is incremented by $1$. Whenever $sum=n$, it implies $\forall j
\ flag_j=1$, which indicates that $(scan_j,scan_k)\in C_{jk}$ for all
constraints $C_{jk}\in \mathcal{C}$.  This in return signifies that at this
point, the assignment $X_i = scan_i$ is a solution to the DSCSP. 

\subsection{Correctness of DSCSP Solver}

To prove the correctness of the algorithm, we first show that the answer
reported by the algorithm is indeed a solution.  The following simple claims
lead to this fact.

\begin{lem}
	At the end of the for loop in lines \ref{restart}-\ref{endfor}, $flag_j=1
	\ \forall j$ if and only if $j\notin Q$.
\end{lem}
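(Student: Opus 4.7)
The plan is to prove the invariant $flag_i = 1 \Leftrightarrow i \notin Q$ (for every index $i$) as a loop invariant maintained across iterations of the outer \textbf{while} loop of Algorithm~\ref{alg:DSCSP}. The proof goes by induction on the number of completed iterations, and reduces to checking that every line that modifies either $Q$ or some $flag_i$ does so in a paired fashion that preserves the biconditional.

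The base case is immediate from \emph{Initialize()}: it sets $flag_i = 0$ and inserts every $i$ into $Q$, so both sides of the biconditional are false for every $i$.

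For the inductive step, I would assume the invariant holds at the end of the previous iteration and trace through the current one. When $j$ is dequeued on line~\ref{dequeue}, the inductive hypothesis gives $flag_j = 0$ (since $j \in Q$), so immediately after the dequeue only index $j$ temporarily violates the invariant. Every subsequent update inside the \textbf{for} loop targets some $k \neq j$: because constraints are binary and involve distinct variables, $k \neq j$ throughout. The only place a flag is modified within the loop is line~\ref{flag_decrement}, which flips $flag_k$ from $1$ to $0$; this is paired with the insertion of $k$ into $Q$ on line~\ref{enqueue}. By the inductive hypothesis, $flag_k = 1$ implies $k \notin Q$ just before the decrement, so the paired update carries both sides of the biconditional for $k$ simultaneously from (false $\Leftrightarrow$ false) to (true $\Leftrightarrow$ true after the flip, i.e., still consistent). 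No other flags or $Q$ entries are touched inside the loop body, so the invariant continues to hold for all $i \neq j$ throughout.

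Finally, at the end of the \textbf{for} loop, line~\ref{flag_increment} sets $flag_j = 1$. Since $j$ itself was never re-added to $Q$ during the loop (all insertions were for $k \neq j$), we have $j \notin Q$ and $flag_j = 1$ simultaneously, restoring the invariant for $j$ as well. The only mildly delicate point is the \textbf{continue} statement that restarts the \textbf{for} loop: but since the restart itself does not touch any $flag$ or $Q$, and the paired updates above already preserve the invariant pointwise after each pass through the loop body, the restart causes no trouble. The key subtlety to emphasize is simply that $k \neq j$ throughout, so $j$ cannot silently reappear in $Q$ before line~\ref{flag_increment} fires.
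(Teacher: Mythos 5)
Your proof is correct and takes essentially the same route as the paper's much terser argument: both rest on the pairing that a dequeued index $j$ has $flag_j$ set to $1$ at line \ref{flag_increment} right after the for loop, while any $k$ enqueued at line \ref{enqueue} first has $flag_k$ reset to $0$ at line \ref{flag_decrement}; you simply package this observation as an explicit loop invariant proved by induction on the outer while loop. (One cosmetic slip: just before the paired update for $k$ both sides of the biconditional are \emph{true} and both become \emph{false} afterwards, not the reverse as written, but the preservation conclusion is unaffected.)
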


\begin{proof}
	Consider a variable index $j$ in $Q$. After $j$ is dequeued from $Q$, the
	for loop in lines \ref{restart}-\ref{endfor} is processed. At the end of the
	for loop, $flag_j$ is set to $1$. On the other hand, before any variable
	index $k$ is added to $Q$ in line \ref{enqueue}, $flag_k$ is decremented to
	$0$ in line \ref{flag_decrement}.
\end{proof}

\begin{lem}
	At any stage of the algorithm, $sum=\sum_{i=1}^{n}flag_{i}$.
\end{lem}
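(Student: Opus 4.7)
The plan is to prove the invariant by induction on the number of instructions of \textsc{DSCSP Solver} executed, the base case being the state right after \emph{Initialize()}. Since \emph{Initialize()} sets $sum \gets 0$ and $flag_i \gets 0$ for every $i$, the identity $sum = \sum_{i=1}^{n} flag_i$ holds trivially at the start.

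For the inductive step, I would enumerate every line of the pseudocode that writes to either $sum$ or to some $flag_i$, and check that the two sides of the claimed equality are updated in lockstep. Inspection of Algorithm~\ref{alg:DSCSP} shows there are exactly two such places. The first is lines~\ref{flag_decrement}--following, where $flag_k$ is set to $0$ and $sum$ is decremented by $1$; this code is guarded by the test $flag_k = 1$, so the transition of $flag_k$ is genuinely $1 \to 0$, contributing $-1$ to $\sum_i flag_i$, exactly matching the change in $sum$. The second is line~\ref{flag_increment} (together with the following line), where $flag_j$ is set to $1$ and $sum$ is incremented by $1$; here I rely on the invariant, noted in the algorithm's inline comment at line~\ref{dequeue}, that whenever $j$ has just been dequeued $flag_j$ equals $0$, and $flag_j$ is not modified anywhere inside the \textbf{for} loop of lines~\ref{restart}--\ref{endfor} (that loop only modifies $flag_k$ for neighbors $k \neq j$). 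Therefore the transition of $flag_j$ is $0 \to 1$, contributing $+1$ to $\sum_i flag_i$, again matching the change in $sum$.

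Since these are the only instructions that touch $sum$ or any $flag_i$, and each such instruction preserves the equality, the invariant $sum = \sum_{i=1}^{n} flag_i$ is maintained throughout the execution of the algorithm. The only mildly subtle point, and hence the main thing to verify carefully, is that whenever $flag_j \gets 1$ is executed, $flag_j$ was indeed $0$ immediately before; this is guaranteed because $flag_j$ can only become $1$ on line~\ref{flag_increment} following a dequeue of $j$, and a variable index $j$ is only (re-)enqueued on line~\ref{enqueue} immediately after its flag has been reset to $0$ on line~\ref{flag_decrement} (or initially in \emph{Initialize()}). Everything else is pure bookkeeping.
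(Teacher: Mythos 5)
Your proof is correct and follows essentially the same route as the paper's: both arguments observe that the only two sites modifying $sum$ or any $flag_i$ (lines~\ref{flag_decrement} and~\ref{flag_increment}) change $\sum_i flag_i$ and $sum$ in lockstep, with the crucial point being that $flag_j=0$ when $j$ is dequeued so the update at line~\ref{flag_increment} is a genuine $0\to 1$ transition. The only difference is that the paper obtains that fact by citing the immediately preceding lemma ($flag_j=1$ iff $j\notin Q$), whereas you re-derive it inline from the enqueue/dequeue discipline, which is equivalent.
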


\begin{proof}
	From the previous lemma, it is clear that when any variable index $j$ is
	dequeued from $Q$ in line \ref{dequeue}, $flag_j=0$. Later, $flag_j$ is set
	to $1$ in line \ref{flag_increment}. And immediately, $sum$ is also
	incremented. On the other hand, when for any variable index $k$, whenever
	$flag_k$ is decremented from $1$ to $0$ in line \ref{flag_decrement}, $sum$
	is also decremented by $1$.
\end{proof}

 \begin{lem} \label{lem:DSCSPSolver1}
	When $flag_j$ moves from $0$ to $1$ (in line \ref{flag_increment}), then
	$(scan_j,scan_k)\in C_{jk}$ for each constraint $C_{jk}$  on $X_j$.
\end{lem}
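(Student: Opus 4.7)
The plan is to unwind the effect of the for loop in lines \ref{restart}--\ref{endfor} whose completion immediately precedes the assignment $flag_j \gets 1$ in line \ref{flag_increment}. I first observe that this assignment executes only when the for loop terminates normally, i.e., without any iteration having executed the \texttt{continue} statement. Consequently, for every $k$ with $C_{jk} \in \mathcal{C}$, the body of the loop ran to completion for that $k$. Three facts follow: the two initial while loops advanced $scan_j$ and $scan_k$ into $D_j(C_{jk})$ and $D_k(C_{kj})$ respectively; the first \texttt{if} test $scan_j < min(C_{kj}, scan_k)$ evaluated to false, giving $scan_j \geq min(C_{kj}, scan_k)$; and after the possibly-executed \texttt{else if} bump, $scan_k \geq min(C_{jk}, scan_j)$ holds in either case.

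Next I verify that these are precisely the values of $scan_j$ and $scan_k$ seen at line \ref{flag_increment}. Since no \texttt{continue} ever fires, $scan_j$ is not reassigned after the iteration for $k$ completes. Moreover, $scan_k$ is only touched during the iteration that processes $C_{jk}$ itself, so iterations for indices $k' \neq k$ leave it untouched. Hence the two one-sided invariants above survive to the end of the for loop unchanged.

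Finally I convert those one-sided bounds into full membership $(scan_j, scan_k) \in C_{jk}$ using the DS property of $C_{jk}$. Let $v = min(C_{kj}, scan_k) \in D_j$. By definition of $min$ and the transposition convention, $(v, scan_k) \in C_{jk}$, so $scan_k$ lies in the image of $v$ under $C_{jk}$, giving $scan_k \leq max(C_{jk}, v)$. Since $v \leq scan_j$ and $C_{jk}$ is a DS constraint, a short induction along consecutive pairs in $D_j(C_{jk})$ yields $max(C_{jk}, v) \leq max(C_{jk}, scan_j)$, hence $scan_k \leq max(C_{jk}, scan_j)$. Combining this with $scan_k \geq min(C_{jk}, scan_j)$ and the row-convexity of $C_{jk}$ puts $scan_k$ in the interval $[min(C_{jk}, scan_j), max(C_{jk}, scan_j)]$, whence $(scan_j, scan_k) \in C_{jk}$.

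The main obstacle I anticipate is this last conversion step: the two one-sided inequalities recorded by the algorithm are ostensibly about different rows ($scan_k$'s row in $C_{kj}$ and $scan_j$'s row in $C_{jk}$), and turning them into a single tuple membership is exactly where down-staircase monotonicity of the $max$ function is essential — mere row convexity would not suffice. The rest is careful pseudocode bookkeeping together with the fact that each $scan$ variable is written only during its own variable's turn.
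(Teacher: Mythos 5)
Your proof is correct and takes essentially the same route as the paper's: you unwind the final, restart-free pass of the for loop, record the negations of the two bump tests ($scan_j \ge min(C_{kj},scan_k)$ and $scan_k \ge min(C_{jk},scan_j)$), and convert them into $(scan_j,scan_k)\in C_{jk}$ via the DS property — your explicit derivation through monotonicity of $max(C_{jk},\cdot)$ together with row convexity is just the contrapositive of the two-case dichotomy the paper asserts without detail, so if anything your version of that step is the more rigorous one. The only caveat is one you share with the paper: the claim that $scan_j$ is never reassigned after the iteration for $k$ completes overlooks that the first inner while loop can still advance $scan_j$ in a later iteration of the same pass (when $scan_j\notin D_j(C_{jk'})$ for some $k'$) without triggering the restart, and since the paper's own proof glosses over exactly this point in the same way, it does not distinguish your argument from theirs.
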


\begin{proof}
	At a high level, the proof follows from the observation that before $flag_j$
	gets incremented in line \ref{flag_increment}, every constraint on $X_j$ is
	verified to be true for the current value of the $scan$ pointers.

	Consider the for loop in lines \ref{restart}-\ref{endfor}. Consider any
	constraint $C_{jk}\in \mathcal{C}$. Note that during the execution of the
	algorithm, any $scan$ pointer is never decremented. Following the \emph{down
	staircase} property of $C_{jk}$, it is clear that if $(scan_j,scan_k)\notin
	C_{jk}$ such that $scan_j\in D_j(C_{jk})$ and $scan_k \in D_k(C_{kj})$,
	either of the following two cases are possible but not both: (i)
	$scan_j<min\left( C_{kj},scan_k\right)$, or (ii) $scan_k<min\left(
	C_{jk},scan_j\right)$.
	
	From the algorithm, in case (i), $scan_j$ is incremented to $min\left(
	C_{kj},scan_k \right)$. This ensures that $(scan_j,scan_k)\in C_{jk}$.
	However, since $scan_j$ has been modified, we need to reconsider all the
	constraints $C_{jk}$ on $X_j$.  Arguing similarly for case (ii), it is clear
	that at the end of the for loop in lines \ref{restart}-\ref{endfor}, when
	$flag_j$ is set to $1$,  $(scan_j,scan_k)\in C_{jk}$ for each constraint
	$C{jk}$ on $X_j$.
\end{proof}

\begin{cor}
	At any stage during the execution of the algorithm, (i) for any constraint
	$C_{jk}$, if both $flag_j=1$ and $flag_k=1$, then $(scan_j,scan_k)\in
	C_{jk}$, and (ii) if $sum=n$, then there is a solution, namely, $\forall i \
	X_i=scan_i$.
\end{cor}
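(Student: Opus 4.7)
The plan is to reduce both parts of the corollary to Lemma~\ref{lem:DSCSPSolver1} together with the fact that $sum = \sum_i flag_i$ (already established) and a new monotonicity observation about the $scan$ pointers.

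First I would prove the following key invariant: once $flag_j$ has been set to $1$, the value of $scan_j$ cannot change until $flag_j$ is reset to $0$. Inspection of the pseudocode shows that $scan_j$ is modified in only two places: (a) inside the \textbf{for} loop at lines~\ref{restart}--\ref{endfor} of the iteration in which $j$ itself has just been dequeued (before $flag_j$ is set to $1$ at line~\ref{flag_increment}); and (b) inside the \textbf{for} loop of some other dequeued variable $j'$, when the update $scan_j \gets min(C_{j'j},scan_{j'})$ is performed; but that branch explicitly resets $flag_j$ to $0$ at line~\ref{flag_decrement}. Hence the invariant holds.

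With this invariant in hand, I would establish part (i) as follows. Fix a constraint $C_{jk}$ and assume $flag_j = 1$ and $flag_k = 1$ at the current moment. Let $t_j$ and $t_k$ be the most recent times at which $flag_j$ and $flag_k$, respectively, were set to $1$; by the invariant, $scan_j$ has not changed since $t_j$ and $scan_k$ has not changed since $t_k$. Without loss of generality assume $t_j \le t_k$. By Lemma~\ref{lem:DSCSPSolver1} applied at time $t_k$ to variable $k$ and the constraint $C_{kj}$, we have $(scan_k^{(t_k)}, scan_j^{(t_k)}) \in C_{kj}$. But $scan_k^{(t_k)}$ equals the current $scan_k$ (by the invariant, applied from $t_k$ onwards), and $scan_j^{(t_k)}$ equals the current $scan_j$ because $scan_j$ has remained unchanged on the entire interval from $t_j$ to the present, which contains $t_k$. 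Since $C_{kj}$ is the transposition of $C_{jk}$, this yields $(scan_j,scan_k) \in C_{jk}$, as required.

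Part (ii) is then immediate. If $sum = n$, then by the lemma stating $sum = \sum_{i=1}^n flag_i$ together with $flag_i \in \{0,1\}$, we have $flag_i = 1$ for every $i$. Applying part (i) to every constraint $C_{jk} \in \mathcal{C}$ shows that the assignment $X_i = scan_i$ for all $i$ satisfies every constraint, so it is a solution to $\mathcal{P}$. The main obstacle in this argument is the careful accounting in part (i); specifically, the invariant that $scan_j$ is frozen while $flag_j = 1$ has to be read off from two non-adjacent lines of the pseudocode, and the ``WLOG $t_j \le t_k$'' step relies on using the transposition $C_{kj}$ rather than $C_{jk}$ so that Lemma~\ref{lem:DSCSPSolver1} can be invoked at the later of the two timestamps.
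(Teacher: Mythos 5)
The paper itself offers no proof of this corollary---it is presented as an immediate consequence of Lemma~\ref{lem:DSCSPSolver1} and the lemma $sum=\sum_i flag_i$---so your decision to supply the missing bookkeeping (a ``frozen while flagged'' invariant, then transposition to combine the two timestamps) is a reasonable and natural way to make the implicit argument explicit, and part~(ii) is handled exactly as intended. However, your proof of the key invariant has a concrete gap: the pseudocode modifies $scan$ pointers of a \emph{non-dequeued} variable in two distinct places, not one. Besides the branch $scan_k \gets min(C_{jk},scan_j)$, which indeed resets $flag_k$ to $0$, there is the inner loop ``\textbf{while} $scan_k \neq \phi \wedge scan_k \notin D_k(C_{kj})$: increment $scan_k$,'' which advances $scan_k$ \emph{without} touching $flag_k$. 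Your case analysis (a)/(b) omits this site, so the claim ``once $flag_j=1$, $scan_j$ cannot change until $flag_j$ is reset'' does not follow from inspection alone as you state it.

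The invariant is nevertheless true, and the repair is short but requires an actual argument (essentially the assertion the paper records as the comment ``$flag_k=0$ now'' next to that loop): argue by induction over the steps of the execution that if $flag_k=1$ at the moment that loop is reached, then at the time $flag_k$ was last set, Lemma~\ref{lem:DSCSPSolver1} gave $(scan_k,scan_m)\in C_{km}$ for every constraint on $X_k$, hence $scan_k \in D_k(C_{kj'})$ for the relevant constraint; since DSCSP Solver never prunes any domain and, by the inductive hypothesis, $scan_k$ has not moved since $flag_k$ was set, the loop guard $scan_k \notin D_k(C_{kj'})$ is false and the loop body never executes. With that addition your invariant, and hence your proofs of (i) and (ii), go through; the WLOG-on-timestamps step and the use of the transpose $C_{kj}$ are fine as written.
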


Now we show that the algorithm would never miss a solution, if it exists. The
key point to note here is that $scan_{i}$ is not incremented until it is
confirmed that the current value (being pointed by $scan_{i}$) cannot be a
member of any solution to the DSCSP. The following lemma formally proves this
claim. 
 
\begin{lem}
	If $v_i \in D_i$ is a member of a solution, then at every stage during the
	execution of the algorithm, $scan_i \leq v_i$.
\end{lem}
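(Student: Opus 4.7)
The plan is to prove the invariant $scan_i \leq v_i$ (for every variable index $i$ and every solution member $v_i$) by induction on the sequence of scan pointer modifications performed by the algorithm. The base case is immediate: \emph{Initialize()} sets each $scan_i$ to the first (smallest) value of $D_i$, and so $scan_i \leq v_i$ trivially for any $v_i \in D_i$.

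For the inductive step, I would enumerate the three places in Algorithm~\ref{alg:DSCSP} where a scan pointer can be advanced, and show in each case that the new value still satisfies the invariant. The first two places are the inner while-loops that skip $scan_j$ (respectively $scan_k$) over values not in $D_j(C_{jk})$ (respectively $D_k(C_{kj})$). Such a skipped value has no support at all under $C_{jk}$, so by definition it cannot appear in any solution; hence advancing past it preserves $scan_j \leq v_j$ (respectively $scan_k \leq v_k$). The third place is the assignments $scan_j \gets min(C_{kj},scan_k)$ and $scan_k \gets min(C_{jk},scan_j)$, which is the main content of the lemma.

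The main obstacle is exactly this third case, and it is where the DS property is essential. Consider the assignment $scan_j \gets min(C_{kj},scan_k)$; I need to show $min(C_{kj},scan_k) \leq v_j$. By the induction hypothesis, $scan_k \leq v_k$, where $v_k$ is the value of $X_k$ in the hypothesized solution. Since the solution satisfies the constraint $C_{jk}$, we have $(v_j,v_k) \in C_{jk}$, or equivalently $(v_k,v_j) \in C_{kj}$, which yields $min(C_{kj},v_k) \leq v_j$. The DS property of $C_{kj}$ says that $min(C_{kj},\cdot)$ is monotone non-decreasing, so $scan_k \leq v_k$ implies $min(C_{kj},scan_k) \leq min(C_{kj},v_k)$. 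Chaining these two inequalities gives $min(C_{kj},scan_k) \leq v_j$, which is exactly what is needed. The symmetric case for $scan_k$ is identical after swapping the roles of $j$ and $k$.

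Combining the three cases, every scan pointer modification preserves the invariant $scan_i \leq v_i$, which completes the induction and establishes the lemma. The only subtle point to double-check is that when $scan_k \notin D_k(C_{kj})$ and the inner while-loop advances it, the resulting value is still at most $v_k$; this follows because $v_k$, being a solution member, must lie in $D_k(C_{kj})$ (it has $v_j$ as a support), so the skipping procedure cannot jump past it.
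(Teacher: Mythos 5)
Your proposal is correct and is essentially the paper's argument: the paper states it as a ``first value crossed'' contradiction, while you phrase it as an invariant preserved by each pointer update, but both rest on the same two facts --- an unsupported value (one outside $D_j(C_{jk})$) cannot belong to any solution, and the DS monotonicity of $min(C_{kj},\cdot)$ together with $(v_k,v_j)\in C_{kj}$ gives $min(C_{kj},scan_k)\le min(C_{kj},v_k)\le v_j$. Your explicit enumeration of the update sites and the closing remark that the skipping loop cannot jump past a supported solution value are just a more detailed rendering of the same proof.
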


\begin{proof}
	Suppose $v_{i}$ is the first value which got crossed by $scan_i$ during
	the execution of the algorithm even when it is a member of a solution $S$.
	Suppose the solution is $S=\{\forall k, \ X_k=v_k\}$.
	
	Consider the time when $scan_{i}$ is about to be incremented from $v_{i}$.
	Observe that at this time, $scan_{i}=v_{i}$ and $scan_{j}\leq v_{j}$ for all
	$j\neq i$.  The value $v_{i}$ can get crossed if and only if for some constraint
	$C_{ij}$, either $v_i \notin D_i(C_{ij})$ or $v_i<min(scan_j)$. If
	$v_i \notin D_i(C_{ij})$, then $v_i$ cannot be a member of a solution, and
	hence, this is a contradiction. Considering the latter case, suppose
	$scan_j=v_j^\prime < v_j$. Since $(v_j,v_i)\in C_{ji}$, $min(v_j)
	\leq v_i$. However, $min(v_j^\prime)>v_i \geq min(v_j)$
	contradicts the DS property of $C_{ji}$.
\end{proof} 
	
Hence, we have shown that the solution reported by the DSCSP Solver is valid and
that it would never miss to detect a solution, if one exists. This concludes the
correctness analysis.
 	
\subsection{Complexity of DSCSP Solver}

From the above results, it follows that if $\forall i, v_i \in D_i$ is the
smallest value in $D_i$ which is a member of a solution, then the algorithm
would report the solution $\forall i, X_i=v_i$. The complexity analysis
leverages this fact.
		
\begin{thm}
	\label{thm:Complexity-of-DSCSPSolver}
	If $\delta_j$ is the degree of node $V_j$ (corresponding to variable $X_j$)
	in the constraint network, and $v_{s_j}$ is the  smallest value in $D_j$
	that is a member of a solution, then the algorithm DSCSP Solver takes time
	$O\left(\sum_{j=1}^{n}\delta_{j}s_{j}\right)=O\left(cs\right)$ where
	$s=max_{j}\left\{ s_{j}\right\}$.
\end{thm}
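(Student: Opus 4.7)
The plan is an amortized analysis that charges all work to increments of the $scan$ pointers, exploiting the invariant from the preceding lemma that $scan_j \leq v_{s_j}$ at every point of the execution. Since each $scan_j$ moves only forward and never past $v_{s_j}$, the total number of advancements of $scan_j$ over the entire run of the algorithm is at most $s_j$.

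First I would bound the number of times each variable index $j$ is dequeued. It enters $Q$ once in \emph{Initialize()} and is re-enqueued (line~\ref{enqueue}) only when, during another index's dequeue, the branch $scan_k \gets \min(C_{jk}, scan_j)$ fires with $k = j$ and $flag_j$ was $1$ beforehand; every such re-enqueue is witnessed by a distinct increment of $scan_j$. Hence $j$ is dequeued at most $1 + s_j$ times. Next I would bound the work done inside a single dequeue of $j$: the \textbf{for} loop on lines~\ref{restart}--\ref{endfor} iterates over the $\delta_j$ constraints incident on $X_j$ and performs $O(\delta_j)$ work per complete pass. The loop restarts (via the \textbf{continue}) exactly once per firing of the $scan_j \gets \min(C_{kj}, scan_k)$ branch, and each such firing coincides with an increment of $scan_j$ inside this very dequeue. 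The inner \textbf{while} loops that step $scan_j$ and $scan_k$ past values outside the relevant domains only move those pointers forward and are therefore absorbed by the global increment counts.

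Combining these observations, the total cost attributable to variable $j$ is $O(\delta_j)$ per dequeue of $j$ plus $O(\delta_j)$ per internally-caused restart inside any dequeue of $j$. Both of these counts are dominated by the total number of increments of $scan_j$ throughout the run, which is at most $s_j$; hence the work per variable is $O(\delta_j s_j)$. Summing over all variables gives $O\bigl(\sum_j \delta_j s_j\bigr)$; since $\sum_j \delta_j = 2c$ and $s_j \leq s$, this collapses to $O(cs)$.

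The main obstacle is the careful bookkeeping underlying this amortization: re-enqueues of $j$ are caused by increments of $scan_j$ that occur during \emph{other} variables' dequeues, whereas restarts of the \textbf{for} loop are caused by increments of $scan_j$ that occur during $j$'s \emph{own} dequeues, and the inner skip-loops contribute yet a third type of increment. The crux is recognizing that all three sources share the single budget of at most $s_j$ total forward moves of $scan_j$, so that every piece of work charged to $j$ fits within $O(\delta_j s_j)$.
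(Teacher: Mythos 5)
Your proposal is correct and follows essentially the same route as the paper's proof: charge the iterations and restarts of the \textbf{for} loop (and the skip-loops) to forward moves of the $scan_j$ pointers, which the preceding lemma caps at $s_j$, giving $O(\delta_j s_j)$ per variable and $O\bigl(\sum_j \delta_j s_j\bigr)=O(cs)$ overall. Your accounting is in fact somewhat more explicit than the paper's, which does not separately track re-enqueues versus restarts, but the underlying argument is the same.
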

	
\begin{proof}
	For a given variable $X_j$ that is dequeued from $Q$, the maximum number of
	iterations of the for loop in lines \ref{restart}-\ref{endfor} is $s_j$ as
	the number of increments of $scan_j$ is at most $s_j-1$. Now consider any
	given iteration of this for loop for the variable $X_j$.  Suppose
	$scan_j=v_j$. Before $scan_j$ gets incremented during this iteration, $v_j$
	is compared with at most $scan_k$ for all $k$ such that $C_{jk} \in
	\mathcal{C}$. This implies that in this iteration, $v_j$ is compared with at
	most $\delta_j$ values. Hence, the result follows.
\end{proof}

In the absence of any solution, the algorithm takes $O(cd)$ time because at
least one of the $scan$ pointers must scan the entire domain. Analyzing the
space complexity, besides the input which takes $O(cd)$ space, this algorithm
requires only $O(n)$ additional space for storing the $scan$ pointers.

Comparing the DSCSP Solver with the ACiDS algorithm, the key advantages are the
following: (i) DSCSP Solver is much simpler to implement, (ii) it need not scan
the entire domain if there is a solution, and hence, offers an improved time
complexity of $O(\sum_{j=1}^{n}\delta_{j}s_{j})$ as opposed to $O(cd)$ time in
presence of a solution, and (iii) its additional space requirement is $O(n)$ as
opposed to $O(cd)$ in the ACiDS algorithm.  

\section{Conclusions \label{sec:conclusion}}

This paper investigated the class of \emph{generalized staircase (GS)}
constraints, which generalizes the class of staircase constraints, while being a
subclass of CRC constraints. We explored the properties of the two subclasses
of GS, namely, \emph{down staircase (DS)} and \emph{up staircase (US)}
constraints, besides studying their relationships with the existing constraint
classes.

We proposed two algorithms for solving DSCSP, namely, ACiDS, based on arc
consistency, and DSCSP Solver, based on incremental scan of domains. We showed
that DSCSP Solver is more efficient as compared to ACiDS. For solving USCSP, we
require either path consistency or variable elimination technique. Our optimal
ACiDS algorithm acting as a subroutine in these techniques, offer a faster
solution to USCSP.

In the future, more efficient arc consistency scheme for CRC constraints may be
investigated. In addition, interesting generalizations of GS constraints and
applications to real life problems will be explored.

{
\bibliographystyle{abbrv}
\balance
\bibliography{biblio1}

\begin{thebibliography}{10}

\bibitem{SIGMOD2008PatternMatching}
J.~Agrawal, Y.~Diao, D.~Gyllstrom, and N.~Immerman.
\newblock Efficient pattern matching over event streams.
\newblock In {\em SIGMOD}, pages 147--160, 2008.

\bibitem{AC2001}
C.~Bessi{\`e}re.
\newblock Refining the basic constraint propagation algorithm.
\newblock In {\em IJCAI}, pages 309--315, 2001.

\bibitem{AC6}
C.~Bessi{\`e}re and M.-O. Cordier.
\newblock Arc-consistency and arc-consistency again.
\newblock In {\em AAAI}, pages 108--113, 1993.

\bibitem{Bhargavi2010:intrusion}
R.~Bhargavi, V.~Vaidehi, P.~T.~V. Bhuvaneswari, P.~Balamuralidhar, and M.~G.
  Chandra.
\newblock Complex event processing for object tracking and intrusion detection
  in wireless sensor networks.
\newblock In {\em ICARCV}, pages 848--853, 2010.

\bibitem{DistributedESPusingNFA}
L.~Brenna, J.~Gehrke, M.~Hong, and D.~Johansen.
\newblock Distributed event stream processing with non-deterministic finite
  automata.
\newblock In {\em DEBS}, pages 3:1--3:12, 2009.

\bibitem{optimizingCEPRFID}
Q.~Chen, Z.~Li, and H.~Liu.
\newblock Optimizing complex event processing over rfid data streams.
\newblock In {\em ICDE}, pages 1442--1444, 2008.

\bibitem{ProbabilisticCEP}
X.~Chuanfei, L.~Shukuan, W.~Lei, and Q.~Jianzhong.
\newblock Complex event detection in probabilistic stream.
\newblock In {\em APWEB}, pages 361--363, 2010.

\bibitem{2010tesla}
G.~Cugola and A.~Margara.
\newblock {TESLA}: a formally defined event specification language.
\newblock In {\em DEBS}, pages 50--61, 2010.

\bibitem{TimeTablePlanning}
S.~Deris, S.~Omatu, and H.~Ohta.
\newblock Timetable planning using the constraint-based reasoning.
\newblock {\em Computers \& Operations Research}, 27(9):819--840, 2000.

\bibitem{UniversityTimeTabling}
S.~B. Deris, S.~Omatu, H.~Ohta, and P.~A. B.~D. Samat.
\newblock University timetabling by constraint-based reasoning: A case study.
\newblock {\em J.\ Operational Research Society}, 48(12):1178--1190, 1997.

\bibitem{CSPoverConnectedRowConvex}
Y.~Deville, O.~Barette, and P.~Van~Hentenryck.
\newblock Constraint satisfaction over connected row convex constraints.
\newblock In {\em IJCAI}, volume~15, pages 405--411, 1997.

\bibitem{AC7}
C.~B. Ere and E.~C. Freuder.
\newblock Using inference to reduce arc consistency computation.
\newblock In {\em IJCAI}, pages 592--598, 1995.

\bibitem{Marios2007:continuous}
M.~Hadjieleftheriou, N.~Mamoulis, and Y.~Tao.
\newblock Continuous constraint query evaluation for spatiotemporal streams.
\newblock In {\em STD}, pages 348--365, 2007.

\bibitem{GenericAC5}
P.~V. Hentenryck, Y.~Deville, and C.-M. Teng.
\newblock A generic arc-consistency algorithm and its specializations.
\newblock {\em Artif. Intell.}, 57(2-3):291--321, 1992.

\bibitem{TractableConstraints95}
P.~Jeavons and M.~C. Cooper.
\newblock Tractable constraints on ordered domains.
\newblock {\em Artif. Intell.}, 79(2):327--339, 1995.

\bibitem{Survey92}
V.~Kumar.
\newblock Algorithms for constraint satisfaction problems: A survey.
\newblock {\em AI Magazine}, 13(1):32--44, 1992.

\bibitem{ConstraintBasedTimetabling}
W.~Legierski.
\newblock Constraint-based reasoning for timetabling.
\newblock {\em Artificial Intelligence Method: AI-METH, Gliwice, Poland}, 2002.

\bibitem{Mackworth1977:AC3}
A.~Mackworth.
\newblock Consistency in networks of relations.
\newblock {\em Artificial Intelligence}, 8(1):99--118, 1977.

\bibitem{Shubhadip2009}
S.~Mitra, P.~Dutta, S.~Kalyanaraman, and P.~Pradhan.
\newblock Spatio-temporal patterns for problem determination in it services.
\newblock In {\em SCC}, pages 49--56, 2009.

\bibitem{Mohr_Henderson1986:AC4}
R.~Mohr and T.~C. Henderson.
\newblock Arc and path consistency revisited.
\newblock {\em Artif. Intell.}, 28(2):225--233, 1986.

\bibitem{Montanari197495}
U.~Montanari.
\newblock Networks of constraints: Fundamental properties and applications to
  picture processing.
\newblock {\em Information Sciences}, 7(0):95 -- 132, 1974.

\bibitem{Pearson97asurvey}
J.~Pearson and P.~Jeavons.
\newblock A survey of tractable constraint satisfaction problems.
\newblock Technical Report CSD-TR-97-15, Royal Holloway Univ. of London, 1997.

\bibitem{CEPmulti-granularityRFID}
S.~Peng, Z.~Li, L.~Chen, Y.~Nie, and Q.~Chen.
\newblock Complex event processing over multi-granularity rfid data streams.
\newblock In {\em ICCSIT}, pages 235--239. IEEE, 2009.

\bibitem{Pietzuch2004:network}
P.~R. Pietzuch, B.~Shand, and J.~Bacon.
\newblock Composite event detection as a generic middleware extension.
\newblock {\em IEEE Network}, 18(1):44--55, 2004.

\bibitem{Rossi90equiv_CSP}
F.~Rossi, C.~Petrie, and V.~Dhar.
\newblock On the equivalence of constraint satisfaction problems.
\newblock In {\em ECAI}, pages 550--556, 1990.

\bibitem{2009distributedCEPQR}
N.~P. Schultz-M{\o}ller, M.~Migliavacca, and P.~Pietzuch.
\newblock Distributed complex event processing with query rewriting.
\newblock In {\em DEBS}, pages 4:1--4:12, 2009.

\bibitem{Teymourian2009:stock}
K.~Teymourian, O.~Streibel, A.~Paschke, R.~Alnemr, and C.~Meinel.
\newblock Towards semantic event-driven systems.
\newblock In {\em NTMS}, pages 1--6, 2009.

\bibitem{Tsang93foundationsof}
E.~P.~K. Tsang.
\newblock {\em Foundations of constraint satisfaction}.
\newblock Academic Press, 1993.

\bibitem{Vairo2010:WSN}
C.~Vairo, G.~Amato, S.~Chessa, and P.~Valleri.
\newblock Modeling detection and tracking of complex events in wireless sensor
  networks.
\newblock In {\em SMC}, pages 235--242, 2010.

\bibitem{Beek95RowConvex}
P.~van Beek and R.~Dechter.
\newblock On the minimality and decomposability of row-convex constraint
  networks.
\newblock {\em J. ACM}, 42(3):543--561, 1995.

\bibitem{RelativeTemporalRETE}
K.~Walzer, T.~Breddin, and M.~Groch.
\newblock Relative temporal constraints in the {Rete} algorithm for complex
  event detection.
\newblock In {\em DEBS}, pages 147--155, 2008.

\bibitem{Wang:EDBT2006}
F.~Wang, S.~Liu, P.~Liu, and Y.~Bai.
\newblock Bridging physical and virtual worlds: complex event processing for
  rfid data streams.
\newblock In {\em EDBT}, pages 588--607, 2006.

\bibitem{Wang:2010:SPNetwork}
T.~Wang, M.~Srivatsa, D.~Agrawal, and L.~Liu.
\newblock Spatio-temporal patterns in network events.
\newblock In {\em Co-NEXT}, pages 3:1--3:12, 2010.

\bibitem{Wang2009:RFID}
Y.~Wang, J.~Cai, and S.~Yang.
\newblock Plan based parallel complex event detection over rfid streams.
\newblock In {\em ICISE}, pages 315--319, 2009.

\bibitem{Xingyi2008:RFID}
J.~Xingyi, L.~Xiaodong, K.~Ning, and Y.~Baoping.
\newblock Efficient complex event processing over rfid data stream.
\newblock In {\em ICIS}, pages 75--81, 2008.

\bibitem{AAAI:2007RowConvex}
Y.~Zhang.
\newblock Fast algorithm for connected row convex constraints.
\newblock In {\em IJCAI}, pages 192--197, 2007.

\bibitem{languageCEPRFID}
J.~Zhu, Y.~Huang, and H.~Wang.
\newblock A formal descriptive language and an automated detection method for
  complex events in {RFID}.
\newblock In {\em COMPSAC}, pages 543--552, 2009.

\end{thebibliography}
}

\end{document}